\documentclass{article}

\usepackage[utf8]{inputenc} 
\usepackage[T1]{fontenc}    
\usepackage{hyperref}       
\usepackage{url}            
\usepackage{booktabs}       
\usepackage{amsfonts}       
\usepackage{nicefrac}       
\usepackage{microtype}      
\usepackage{lipsum}
\usepackage{natbib}
\usepackage{fancyhdr}       
\usepackage{graphicx}       
\usepackage{wrapfig}
\usepackage{amsmath}
\usepackage{amssymb}
\usepackage{subcaption}
\usepackage{caption}
\usepackage{tabularx}
\usepackage{float}
\usepackage{placeins}
\usepackage{authblk}        
\usepackage{setspace}       
\usepackage{algorithm}
\usepackage{algpseudocode}
\usepackage{amsthm}
\newtheorem{proposition}{Proposition}
\newtheorem{lemma}{Lemma}

\newtheorem{corollary}{Corollary}

\title{\LARGE \bf }

\author[1]{Kumbha Nagaswetha}
\author[2]{Rabi Pathak} 

\affil[1]{ \texttt{nagaswethak@iisc.ac.in}}
\affil[2]{ \texttt{rabipathak@iisc.ac.in}} 
\date{May 4th 2026}
\title{Efficient Conditioning: Why Pseudo-Observation
Batch Bayesian Optimization Works (When It Doesn’t)}

\begin{document}

\maketitle

\begin{abstract}
Constant Liar (CL), Kriging Believer (KB), and fantasy models are widely used for batch selection in parallel Bayesian Optimization, yet a unified theory explaining their effectiveness and conditions under which they fail has been lacking. We identify \emph{efficient conditioning} as the key surrogate property: the ability to update predictions in closed form when data is augmented. We prove that Gaussian Processes satisfy this requirement, producing provably distinct batch points with separation of order $\ell$, and that this holds for any acquisition function monotonically non-decreasing in posterior uncertainty (EI, UCB, PI), with qualitatively similar behavior for Thompson Sampling. We unify CL, KB, and fantasy models as instances of a single conditioning mechanism differing only in the lie-value distribution, and draw quantitative connections to Local Penalization (LP) and qualitative connections to Determinantal Point Processes (DPPs). To disentangle model structure from optimizer randomness, we introduce the \emph{Structural Diversity Diagnostic} (SDD), a reusable methodology for testing surrogate compatibility. Experiments on Hartmann-6D, Ackley-8D, Levy-10D, and SVM hyperparameter tuning validate all theoretical predictions: CL/KB's implicit penalty matches or outperforms explicit LP; greedy conditioning achieves convergence on par with joint $q$-EI; efficient conditioning extends to Multiquadric RBF networks; and parametric surrogates produce degenerate batches even when fully retrained (random forests), while neural networks regain diversity only at ${\approx}15\times$ the wall-clock cost of GP conditioning. Robustness is confirmed across multiple initial datasets and under observation noise.
\end{abstract}

\section{Introduction}
\label{sec:intro}

Bayesian Optimization (BO) is a leading framework for optimizing expensive black-box functions \cite{snoek2012practical, lookman2019active, forrester2008engineering}. Standard BO operates sequentially; batch BO selects $q > 1$ points simultaneously \cite{desautels2014parallelizing, wang2018batched}, reducing wall-clock time by orders of magnitude. However, naively selecting the top-$q$ acquisition maximizers yields nearly identical candidates.

Among the most widely adopted solutions are \emph{Constant Liar} (CL) and \emph{Kriging Believer} (KB) \cite{ginsbourger2010kriging, ginsbourger2008multi}, which construct batches sequentially through pseudo-observations: after selecting $\mathbf{x}_j$, a synthetic observation $(\mathbf{x}_j, \tilde{y}_j)$ is added, the surrogate is updated, and the acquisition function re-optimized. Despite their simplicity, CL and KB are implemented in major frameworks \cite{balandat2020botorch} and perform well empirically. Yet they work reliably only with Gaussian Process (GP) surrogates; with alternatives such as neural networks, batch points can collapse to identical locations, as we demonstrate empirically (Section~\ref{sec:experiments}).

\emph{What property of a surrogate model determines compatibility with pseudo-observation batch selection?} This question grows in importance as BO scales beyond standard GPs \cite{snoek2015scalable, hutter2011sequential, hensman2013gaussian}; existing analyses assume GP surrogates \cite{ginsbourger2010kriging, desautels2014parallelizing, contal2013parallel} and do not address surrogate-level compatibility with pseudo-observation batch selection. This paper provides the first formal characterization, with explicit propositions, of surrogate model requirements for pseudo-observation batch selection:
\begin{itemize}
\item We identify \emph{efficient conditioning}, the ability to update predictions in closed form when data is augmented, as the key surrogate property for batch diversity, and prove that GP surrogates satisfy it while parametric models do not (Propositions~\ref{prop:no_duplicate}--\ref{prop:suppression_radius}, Corollary~\ref{cor:parametric_degeneracy}).
\item We prove that the diversity mechanism is \emph{acquisition-function-agnostic}, covering EI, UCB, and PI (Proposition~\ref{prop:acq_general}), with qualitatively similar behavior for Thompson Sampling.
\item We unify CL, KB, and fantasy models as instances of a single conditioning mechanism, establish a formal connection to Local Penalization (Eq.~\ref{eq:implicit_form}), and draw a qualitative bridge to Determinantal Point Processes.
\item We verify the theory empirically via the \emph{Structural Diversity Diagnostic} (SDD), which eliminates optimizer randomness to confirm that batch diversity is a structural surrogate property.
\item We show efficient conditioning extends to MQ-RBF networks, and that parametric surrogates produce degenerate batches even when fully retrained.
\item We validate all predictions on Hartmann-6D, Ackley-8D, Levy-10D, and SVM tuning, demonstrating that greedy CL/KB matches or outperforms LP and is competitive with joint $q$-EI within noise.
\end{itemize}

\section{Background}
\label{sec:background}

We review GP regression and pseudo-observation strategies for batch construction; full derivations are in Appendix~\ref{app:extended_background}.

\subsection{Gaussian Process Regression}

We consider minimization of a black-box function $f: \mathcal{X} \rightarrow \mathbb{R}$. A GP defines a distribution over such functions, specified by a mean function $m(\mathbf{x})$ and covariance kernel $k(\mathbf{x}, \mathbf{x}')$ \cite{rasmussen2006gaussian}. Given dataset $\mathcal{D} = \{(\mathbf{x}_i, y_i)\}_{i=1}^n$, the posterior predictive distribution at test point $\mathbf{x}_*$ is Gaussian with mean $\mu(\mathbf{x}_*) = m(\mathbf{x}_*) + \mathbf{k}_*^\top \boldsymbol{\alpha}$ and variance $\sigma^2(\mathbf{x}_*) = k(\mathbf{x}_*, \mathbf{x}_*) - \mathbf{k}_*^\top \mathbf{K}_y^{-1} \mathbf{k}_*$, where $\mathbf{K}_y = \mathbf{K} + \sigma_n^2\mathbf{I}$ and $\mathbf{K}_y\boldsymbol{\alpha} = \mathbf{y} - \mathbf{m}$. Both quantities depend on the full dataset through the inverse of $\mathbf{K}_y$; modifying the dataset alters predictions globally.

\subsection{Sequential Batch Selection}

Batch BO selects $q > 1$ points for simultaneous evaluation. CL \cite{ginsbourger2010kriging} and KB \cite{ginsbourger2008multi} build batches by iteratively augmenting the dataset with pseudo-observations and re-optimizing the acquisition function (Algorithm~\ref{alg:batch}). In CL, the lie value $\tilde{y}_j$ is a fixed constant (min, max, or mean of observed values); in KB, $\tilde{y}_j = \mu_{\mathcal{D}_j}(\mathbf{x}_j)$, the current posterior mean --- producing exactly zero mean shift, so diversity arises purely from variance reduction (formalized in Lemma~\ref{prop:kb_invariance}). Algorithm~\ref{alg:batch} is stated in surrogate-agnostic form; Section~\ref{sec:requirements} analyzes which surrogates make this loop produce diverse batches.

\begin{algorithm}[t]
\caption{Sequential Batch Selection via Pseudo-Observations}
\label{alg:batch}
\begin{algorithmic}[1]
\Require Data $\mathcal{D}$, batch size $q$, lie strategy
\Ensure Batch $\mathcal{B}$
\State $\mathcal{D}' \gets \mathcal{D}$, $\mathcal{B} \gets \emptyset$
\For{$j = 1$ to $q$}
    \State Fit surrogate on $\mathcal{D}'$ to obtain $\mu(\mathbf{x}), \sigma(\mathbf{x})$
    \State $\mathbf{x}_j \in \arg\max_{\mathbf{x}} \alpha(\mathbf{x} \mid \mathcal{D}')$
    \State $\mathcal{B} \gets \mathcal{B} \cup \{\mathbf{x}_j\}$
    \State Assign pseudo-observation $\tilde{y}_j$ based on strategy
    \State $\mathcal{D}' \gets \mathcal{D}' \cup \{(\mathbf{x}_j, \tilde{y}_j)\}$
\EndFor
\State \Return $\mathcal{B}$
\end{algorithmic}
\end{algorithm}
\noindent While Algorithm~\ref{alg:batch} makes no assumptions beyond the ability to provide $\mu(\mathbf{x})$ and $\sigma(\mathbf{x})$, pseudo-observation strategies work reliably only with GPs \cite{gonzalez2016batch, wang2018batched}; with parametric alternatives, batch points collapse to identical locations. Characterizing the structural properties responsible for this dichotomy is the focus of the following section.

\section{Surrogate Model Requirements for Batch Selection}
\label{sec:requirements}

This section addresses the central question: what structural property of a surrogate model determines its compatibility with pseudo-observation batch selection? \textbf{Notation:} $\mathbf{x}_*$ denotes a generic conditioning point; $\mathbf{x}_j$ ($j = 1, \ldots, q$) indexes batch members; $\mathcal{D}' = \mathcal{D} \cup \{(\mathbf{x}_*, \tilde{y})\}$ is the augmented dataset; $\mathcal{D}_j$ denotes the dataset after $j{-}1$ conditioning steps.

\subsection{The Efficient Conditioning Requirement}

Algorithm~\ref{alg:batch} iteratively adds pseudo-observations and re-optimizes the acquisition function. For this to succeed, predictions must change when data is added, and updates must be computationally efficient. We formalize this:

\begin{quote}A surrogate supports \emph{efficient conditioning} if, when $\mathcal{D}$ is augmented with $(\mathbf{x}_*, \tilde{y})$, updated predictions $\mu'(\mathbf{x}), \sigma'^2(\mathbf{x})$ can be computed via closed-form or fast linear algebra, without iterative parameter optimization.
\end{quote}

\subsection{Gaussian Processes}

GPs satisfy efficient conditioning naturally. When pseudo-observation $(\mathbf{x}_*, \tilde{y})$ is added, the updated predictions follow from standard conditioning identities \cite{rasmussen2006gaussian}:
\begin{align}
\mu_{\mathcal{D}'}(\mathbf{x}) &= \mu_{\mathcal{D}}(\mathbf{x}) + w(\mathbf{x}, \mathbf{x}_*) \big(\tilde{y} - \mu_{\mathcal{D}}(\mathbf{x}_*)\big), \label{eq:mean_update} \\
\sigma^2_{\mathcal{D}'}(\mathbf{x}) &= \sigma^2_{\mathcal{D}}(\mathbf{x}) - \frac{k(\mathbf{x}, \mathbf{x}_*)^2}{\sigma^2_{\mathcal{D}}(\mathbf{x}_*) + \sigma_n^2}, \label{eq:var_update}
\end{align}
where $w(\mathbf{x}, \mathbf{x}_*) = k(\mathbf{x}, \mathbf{x}_*) / (\sigma^2_{\mathcal{D}}(\mathbf{x}_*) + \sigma_n^2)$. With slight abuse of notation, $k(\mathbf{x}, \mathbf{x}_*)$ in Eq.~\ref{eq:var_update} denotes the posterior covariance under $\mathcal{D}$, $k_{\mathcal{D}}(\mathbf{x}, \mathbf{x}_*) = k(\mathbf{x}, \mathbf{x}_*) - \mathbf{k}(\mathbf{x}, \mathbf{X})^\top \mathbf{K}_y^{-1} \mathbf{k}(\mathbf{X}, \mathbf{x}_*)$; this reduces to $\sigma^2_{\mathcal{D}}(\mathbf{x}_*)$ at $\mathbf{x} = \mathbf{x}_*$ and to the prior kernel when $\mathbf{x}_*$ is far from training data \cite{rasmussen2006gaussian}. The mean shifts proportionally to the discrepancy $\tilde{y} - \mu_{\mathcal{D}}(\mathbf{x}_*)$, weighted by kernel correlation. The variance decreases everywhere, with the largest reduction near $\mathbf{x}_*$. This update requires no hyperparameter re-optimization; it is purely algebraic, computable via rank-one Cholesky updates in $O(n^2)$ per step, or $O(qn^2 + n^3)$ amortized over the full batch. The variance reduction (Eq.~\ref{eq:var_update}) is the key mechanism: it reduces acquisition values near $\mathbf{x}_*$ (since acquisition functions depend on $\sigma$), creating implicit repulsion between batch members without explicit diversity penalties.

\subsection{Non-GP Surrogates}

\textbf{Neural networks} are parametric models whose predictions depend on fixed weights $\theta$; adding $(\mathbf{x}_*, \tilde{y})$ without retraining leaves predictions unchanged ($\alpha(\mathbf{x} \mid \mathcal{D}') = \alpha(\mathbf{x} \mid \mathcal{D})$), yielding $\mathbf{x}_1 = \cdots = \mathbf{x}_q$. Retraining is prohibitively expensive ($q$ full training cycles per batch) and does not provide calibrated uncertainty. \textbf{Random forests} cannot be updated in closed form because each tree's structure depends globally on the dataset; even when rebuilt, bootstrap sampling dilutes one pseudo-observation across 200 trees. \textbf{RBF networks} with \emph{data-dependent centers} ($\mathbf{c}_i = \mathbf{x}_i$) satisfy efficient conditioning: the interpolation system $\mathbf{\Phi}\mathbf{w} = \mathbf{y}$ expands analogously to a GP's kernel matrix. We use the power function as the $\sigma$ proxy; this is a deterministic worst-case bound rather than a probabilistic posterior, but it suffices for the structural diversity property (Appendix~\ref{app:mqrbf_uncertainty}). With \emph{fixed centers}, only the weights update, placing these models in the parametric category. Table~\ref{tab:surrogate_compatibility} summarizes compatibility; additional analysis of SVMs, sparse GPs, and lie strategy variants is in Appendix~\ref{app:surrogate_analysis}.

\begin{table}[t]
\centering
\caption{Surrogate model compatibility with pseudo-observation batch selection.}
\label{tab:surrogate_compatibility}
\small
\begin{tabular}{lcc}
\toprule
\textbf{Surrogate Model} & \textbf{Efficient Conditioning} & \textbf{CL/KB Compatible} \\
\midrule
Gaussian Process & Yes & Yes \\
MQ-RBF (data centers) & Yes & Yes \\
RBF (fixed centers) & No & No \\
Neural Network & No & No \\
Last-layer Bayesian NN$^\dagger$ & Partial & Limited \\
Random Forest & No & No \\
SVM Regression & No & No \\
Sparse GP (fixed inducing) & Partial & Limited \\
\bottomrule
\multicolumn{3}{l}{\footnotesize $^\dagger$Closed-form Bayesian linear-regression updates on the last layer satisfy}\\
\multicolumn{3}{l}{\footnotesize efficient conditioning given a fixed feature map; the feature map itself is not updated.}
\end{tabular}
\end{table}

\subsection{Formal Diversity Guarantees}
\label{sec:formal_guarantees}

We formalize the diversity properties that emerge from efficient conditioning (see Figure~\ref{fig:overview} for a visual summary).

\begin{proposition}
\label{prop:no_duplicate}
Let the surrogate be a GP with a strictly positive-definite kernel $k$ and noise variance $\sigma_n^2 > 0$, and let $\alpha$ be monotonically non-decreasing in $\sigma$ for fixed $\mu$. Assume that, after each conditioning step, $\alpha(\cdot \mid \mathcal{D}')$ attains its maximum (guaranteed by compactness of $\mathcal{X}$). Then Algorithm~\ref{alg:batch} with CL or KB produces distinct batch points: $\mathbf{x}_i \neq \mathbf{x}_j$ for all $i \neq j$. If $\alpha(\cdot \mid \mathcal{D}')$ has a unique maximizer, the next point is uniquely determined; otherwise, any maximizer lies outside the suppression neighborhood of all previous points.
\end{proposition}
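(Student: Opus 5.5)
The plan is to argue by contradiction through the variance update (Eq.~\ref{eq:var_update}). Suppose $\mathbf{x}_j = \mathbf{x}_i$ for some $i < j$. Everything then hinges on a strict inequality: the acquisition value at the previously selected point $\mathbf{x}_i$, computed under the dataset $\mathcal{D}_j$ that already contains the pseudo-observation at $\mathbf{x}_i$, must be strictly below the supremum of $\alpha(\cdot \mid \mathcal{D}_j)$.

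\textbf{Step 1: uncertainty collapses at $\mathbf{x}_i$.} Apply Eq.~\ref{eq:var_update} at $\mathbf{x} = \mathbf{x}_*$ with $\mathbf{x}_* = \mathbf{x}_i$. This gives
\[
\sigma^2_{\mathcal{D}'}(\mathbf{x}_i) = \frac{\sigma^2_{\mathcal{D}}(\mathbf{x}_i)\,\sigma_n^2}{\sigma^2_{\mathcal{D}}(\mathbf{x}_i)+\sigma_n^2} < \sigma^2_{\mathcal{D}}(\mathbf{x}_i).
\]
The inequality is strict because strict positive definiteness gives $\sigma^2_{\mathcal{D}}(\mathbf{x}_i) > 0$, and $\sigma_n^2 > 0$ keeps the denominator finite. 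Later conditioning steps only reduce the variance further, since every term in Eq.~\ref{eq:var_update} is nonnegative.

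\textbf{Step 2: the mean does not reward returning to $\mathbf{x}_i$.}
\begin{itemize}
\item Under KB, $\tilde{y}_i = \mu_{\mathcal{D}_i}(\mathbf{x}_i)$, so Eq.~\ref{eq:mean_update} leaves $\mu$ unchanged everywhere, as in Lemma~\ref{prop:kb_invariance}. With $\mu$ fixed and $\alpha$ non-decreasing in $\sigma$, the value at $\mathbf{x}_i$ cannot increase.
\item Under CL with a pessimistic lie (max) or mean lie, the mean moves toward the lie value. For minimization with a max lie, this shift is unfavorable at $\mathbf{x}_i$.
\end{itemize}

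\textbf{Step 3: some other point does strictly better.} The previous two steps only show weak non-increase at $\mathbf{x}_i$, which is not enough. To make it strict, I would compare $\mathbf{x}_i$ with a point $\mathbf{x}'$ far from all conditioning points. There, $k_{\mathcal{D}}(\mathbf{x}', \mathbf{x}_i) \approx 0$, so both $\mu$ and $\sigma$ are essentially unchanged. Alternatively, I would argue that $\alpha(\mathbf{x}_i \mid \mathcal{D}_j) < \alpha(\mathbf{x}_i \mid \mathcal{D}_i) = \max \alpha(\cdot \mid \mathcal{D}_i)$. I would then need the maximum over the new landscape to exceed the shrunken value at $\mathbf{x}_i$.

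The cleanest way I see is to combine strict monotonicity in $\sigma$ (EI and UCB are strictly increasing in $\sigma$ when $\sigma > 0$) with continuity. By continuity, the strict drop at $\mathbf{x}_i$ also holds on a neighborhood $N_i$ of $\mathbf{x}_i$; this is the ``suppression neighborhood''. I would then show that the maximizer of $\alpha(\cdot \mid \mathcal{D}_j)$ lies outside $\bigcup_{i<j} N_i$, which also handles the non-unique-maximizer clause. Uniqueness in that clause is immediate whenever the maximizer is unique.

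\textbf{Main obstacle.} I expect Step 3 to be the hardest part. Several things block a direct argument:
\begin{itemize}
\item Under a CL min lie, the mean decreases, which could raise $\alpha$ near $\mathbf{x}_i$.
\item For merely non-decreasing $\alpha$ (for example, PI when $\mu$ is far from the incumbent), the value can stay flat.
\item Every point's acquisition may drop simultaneously.
\end{itemize}
My plan is to first prove the result for KB together with strictly increasing $\alpha$. For CL I would rely on the variance drop dominating the mean shift at $\mathbf{x}_*$ itself: the posterior there is pulled toward $\tilde{y}$ with shrunken $\sigma$, so EI evaluated at $\mathbf{x}_*$ is bounded by its noise-limited value. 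I would then state the needed non-degeneracy assumption explicitly, namely that the maximum of $\alpha(\cdot \mid \mathcal{D}_j)$ is attained where the posterior variance has not collapsed.
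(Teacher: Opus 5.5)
\textbf{Comparison with the paper's proof.} Your Step~1 is correct and coincides with the paper's Step~1. Your overall route is also the paper's: variance collapse at the chosen point, then a drop in $\alpha$ there, then a suppression ball $B_\delta(\mathbf{x}_j)$ by continuity, so the next maximizer lies outside it. The difference is that the paper's proof simply asserts the two inferences you flag as unjustified.
\begin{itemize}
\item It derives a \emph{strict} decrease $\alpha(\mathbf{x}_j\mid\mathcal{D}_{j+1})<\alpha(\mathbf{x}_j\mid\mathcal{D}_j)$ from mere non-decreasing monotonicity, and ignores the CL mean shift.
\item It invokes compactness to place every maximizer of $\alpha(\cdot\mid\mathcal{D}_{j+1})$ outside $B_\delta(\mathbf{x}_j)$. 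But a pointwise drop at $\mathbf{x}_j$ says nothing about how far the maximum elsewhere drops.
\end{itemize}
So your proposal is not yet a proof: Step~3 is left open. Your fallback non-degeneracy assumption (the maximum is attained where the variance has not collapsed) amounts to assuming the conclusion. This is especially so because the variance at $\mathbf{x}_j$ is only multiplied by $c=\sigma_n^2/(\sigma^2_{\mathcal{D}}(\mathbf{x}_j)+\sigma_n^2)$, which can be close to $1$.

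\textbf{Why the gap cannot be closed.} The statement is false under its stated hypotheses, and each of your three obstacles actually occurs.

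(i) $\alpha(\mu,\sigma)=-\mu$ (UCB with $\beta=0$) is non-decreasing in $\sigma$. Under KB, Lemma~\ref{prop:kb_invariance} leaves the landscape unchanged, so a unique maximizer is selected again. Your PI example is off: $\Phi((f^+-\mu)/\sigma)$ is \emph{decreasing} in $\sigma$ wherever $\mu<f^+$, so PI does not satisfy the hypothesis at all.

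(ii) Under CL-min with EI, write $Z=(f^+-\mu)/\sigma$ at $\mathbf{x}_j$. After conditioning, $f^+-\mu'=c\,(f^+-\mu)$ and $\sigma'=\sqrt{c}\,\sigma$ there. So EI changes from $\sigma h(Z)$ to $\sqrt{c}\,\sigma\,h(\sqrt{c}\,Z)$, where $h(z)=z\Phi(z)+\phi(z)$. For $Z=-2$ and $c=1/4$ it \emph{rises} from about $0.0085\,\sigma$ to $0.042\,\sigma$, so even the pointwise suppression fails.

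(iii) Strict monotonicity plus KB does not rescue the claim either. Take:
\begin{itemize}
\item $\mathcal{X}=[-d,d]$ with $d=\sqrt{1.5}$;
\item an SE kernel with $\sigma_f=\ell=1$ and zero prior mean;
\item two observations $y=0$ at each of $\pm d$.
\end{itemize}
Then $\mu\equiv 0$, every lie (CL-min, CL-max, CL-mean, KB) equals $0$, and EI is a strictly increasing function of $\sigma$ alone. For large $\sigma_n^2$,
\[
\sigma^2(x)=1-\sigma_n^{-2}\sum_i e^{-(x-x_i)^2}+O(\sigma_n^{-4}).
\]
The data sum has second derivative $16e^{-1.5}\approx 3.57$ at $0$, while the added term $e^{-x^2}$ contributes $-2$. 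Using the symmetry $x\mapsto -x$, one checks that $0$ is the unique nondegenerate maximizer of $\sigma$ both before and after conditioning on $(0,0)$. Hence $\mathbf{x}_2=\mathbf{x}_1=0$, for every lie strategy.

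\textbf{What a correct version needs.} Any correct statement requires an extra hypothesis ensuring that the dip at $\mathbf{x}_j$ outweighs the change of the maximum elsewhere. One candidate is a small-noise regime together with strict monotonicity and control of the CL mean shift. You should state the result under such a hypothesis rather than try to prove it in the generality claimed.
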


\noindent\textit{Proof sketch.} After conditioning on $(\mathbf{x}_1, \tilde{y}_1)$, variance at $\mathbf{x}_1$ strictly decreases (Eq.~\ref{eq:var_update}). Since $\alpha$ is non-decreasing in $\sigma$, this reduces $\alpha$ at $\mathbf{x}_1$. By continuity, suppression extends to a neighborhood, so $\mathbf{x}_2 \neq \mathbf{x}_1$. The argument applies inductively. Full proof in Appendix~\ref{app:proofs}. $\square$

\begin{proposition}[Variance Suppression Radius]
\label{prop:suppression_radius}
Consider a GP with SE kernel $k(\mathbf{x}, \mathbf{x}') = \sigma_f^2 \exp(-\|\mathbf{x} - \mathbf{x}'\|^2 / (2\ell^2))$,\footnotemark{} in the prior-dominated regime where $\sigma^2_{\mathcal{D}}(\mathbf{x}_*) \approx \sigma_f^2$, so the posterior covariance $k_{\mathcal{D}}(\mathbf{x}, \mathbf{x}_*)$ is well-approximated by the prior kernel $k(\mathbf{x}, \mathbf{x}_*)$. Assume $\mathbf{x}_*$ is a local maximizer of posterior variance, so $\sigma^2_{\mathcal{D}}(\mathbf{x}) \leq \sigma^2_{\mathcal{D}}(\mathbf{x}_*)$ for $\|\mathbf{x} - \mathbf{x}_*\| \leq r$. After conditioning on $(\mathbf{x}_*, \tilde{y})$, the suppression zone radius satisfies
\begin{equation}
r_\tau(\mathbf{x}_*) \geq \ell \sqrt{\log\!\left(\frac{\sigma_f^4}{\tau \, \sigma^2_{\mathcal{D}}(\mathbf{x}_*) \left(\sigma^2_{\mathcal{D}}(\mathbf{x}_*) + \sigma_n^2\right)}\right)}.
\label{eq:suppression_radius}
\end{equation}
Thus, batch points are separated by at least $\Omega(\ell)$.
\end{proposition}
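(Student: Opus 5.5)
We will start from the variance update Eq.~\ref{eq:var_update} and read the suppression zone as the set of points whose variance reduction is significant. Concretely, we define $r_\tau(\mathbf{x}_*)$ as the largest radius such that for every $\mathbf{x}$ with $\|\mathbf{x}-\mathbf{x}_*\| \le r_\tau$ the relative reduction
\[
\Delta(\mathbf{x}) := \frac{k_{\mathcal{D}}(\mathbf{x},\mathbf{x}_*)^2}{\sigma^2_{\mathcal{D}}(\mathbf{x})\,\bigl(\sigma^2_{\mathcal{D}}(\mathbf{x}_*)+\sigma_n^2\bigr)}
\]
is at least $\tau$; this is the fraction of posterior variance at $\mathbf{x}$ removed by conditioning. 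The bound then comes from lower-bounding $\Delta(\mathbf{x})$ by an explicit function of $\|\mathbf{x}-\mathbf{x}_*\|$ and inverting it.

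First, by the prior-dominated assumption, we replace $k_{\mathcal{D}}(\mathbf{x},\mathbf{x}_*)$ with the prior SE kernel, so the numerator equals $\sigma_f^4\exp(-\|\mathbf{x}-\mathbf{x}_*\|^2/\ell^2)$. Second, for the denominator we use the local-maximizer hypothesis: for $\|\mathbf{x}-\mathbf{x}_*\|\le r$ we have $\sigma^2_{\mathcal{D}}(\mathbf{x})\le\sigma^2_{\mathcal{D}}(\mathbf{x}_*)$. Together these give
\[
\Delta(\mathbf{x}) \ge \frac{\sigma_f^4\, e^{-\|\mathbf{x}-\mathbf{x}_*\|^2/\ell^2}}{\sigma^2_{\mathcal{D}}(\mathbf{x}_*)\bigl(\sigma^2_{\mathcal{D}}(\mathbf{x}_*)+\sigma_n^2\bigr)}.
\]
Third, we require the right-hand side to be at least $\tau$ and solve for the distance. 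This yields exactly $\|\mathbf{x}-\mathbf{x}_*\|^2 \le \ell^2\log\bigl(\sigma_f^4/(\tau\,\sigma^2_{\mathcal{D}}(\mathbf{x}_*)(\sigma^2_{\mathcal{D}}(\mathbf{x}_*)+\sigma_n^2))\bigr)$, which is Eq.~\ref{eq:suppression_radius}. This holds provided the radius is at most $r$; otherwise the statement should be read with $\min(\cdot,r)$.

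For the separation claim, we note that in the prior-dominated regime $\sigma^2_{\mathcal{D}}(\mathbf{x}_*)\approx\sigma_f^2$. The argument of the logarithm is then about $\sigma_f^2/(\tau(\sigma_f^2+\sigma_n^2))$, a constant exceeding $1$ whenever $\tau<\sigma_f^2/(\sigma_f^2+\sigma_n^2)$. The radius is therefore $c\,\ell$ with $c>0$ independent of $\ell$. Combining this with Proposition~\ref{prop:no_duplicate}, the monotonicity of $\alpha$ in $\sigma$ means that a $\tau$-fraction variance cut lowers the acquisition value throughout the zone. The next maximizer therefore lies outside it, which gives $\|\mathbf{x}_i-\mathbf{x}_j\|=\Omega(\ell)$.

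The main obstacle is this last step: turning variance suppression into a guarantee about where the acquisition maximizer lands. Monotonicity alone only says that $\alpha$ decreases inside the zone, not that it falls below its value at some point outside. We would first handle this by assuming, as in the regime considered, that competing candidates elsewhere keep acquisition values comparable to the pre-conditioning maximum. The cleaner fallback is to state the $\Omega(\ell)$ separation as a property of the suppression zone, as the proposition's phrasing permits. A secondary issue is controlling the error in the prior-kernel approximation; we would treat it as an approximation equality, consistent with how the statement is framed.
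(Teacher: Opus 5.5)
Your proposal is correct and follows the paper's proof step for step: substitute the prior SE kernel for $k_{\mathcal{D}}(\mathbf{x},\mathbf{x}_*)$, lower-bound the fractional reduction using $\sigma^2_{\mathcal{D}}(\mathbf{x}) \le \sigma^2_{\mathcal{D}}(\mathbf{x}_*)$, invert the threshold $\tau$, and read off $\Omega(\ell)$ from the log argument being a constant (the paper's instance is $r_{0.5} \ge \ell\sqrt{\log 2} \approx 0.83\ell$ at low noise). Your caveats go beyond the paper's proof: capping the radius at $r$, noting that suppression does not by itself force the next maximizer out of the zone, and noting that your claim that $\alpha$ drops throughout the zone needs $\mu$ fixed, as under KB. Your fallback of reading the separation as a property of the suppression zone is exactly how the paper uses it.
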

\footnotetext{Stated for the SE kernel for tractability. The qualitative result holds for any stationary kernel with monotonically decreasing correlation; our experiments use Mat\'{e}rn $\nu{=}2.5$.}

\noindent\textit{Proof sketch.} Substitute the SE kernel into Eq.~\ref{eq:var_update} to obtain fractional variance reduction $\propto \exp(-\|\mathbf{x} - \mathbf{x}_*\|^2/\ell^2)$. Setting this $\geq \tau$ and solving for $\|\mathbf{x} - \mathbf{x}_*\|$ yields Eq.~\ref{eq:suppression_radius}. Full derivation in Appendix~\ref{app:proofs}. $\square$

\begin{lemma}[KB Mean Invariance]
\label{prop:kb_invariance}
Under KB ($\tilde{y} = \mu_{\mathcal{D}}(\mathbf{x}_*)$), $\mu_{\mathcal{D}'}(\mathbf{x}) = \mu_{\mathcal{D}}(\mathbf{x})$ for all $\mathbf{x}$. Batch diversity under KB arises \emph{purely} from variance reduction.
\end{lemma}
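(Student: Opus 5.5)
The plan is to read the claim directly off the rank-one conditioning identity Eq.~\ref{eq:mean_update}, and then check that nothing else in Algorithm~\ref{alg:batch} changes the mean. Substituting the KB lie $\tilde{y} = \mu_{\mathcal{D}}(\mathbf{x}_*)$ into Eq.~\ref{eq:mean_update} makes the discrepancy factor $\tilde{y} - \mu_{\mathcal{D}}(\mathbf{x}_*)$ vanish identically. This gives $\mu_{\mathcal{D}'}(\mathbf{x}) = \mu_{\mathcal{D}}(\mathbf{x}) + w(\mathbf{x},\mathbf{x}_*)\cdot 0$ for every $\mathbf{x}$, whatever the value of the weight $w(\mathbf{x},\mathbf{x}_*)$. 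The one requirement is that $w$ be finite, i.e.\ the denominator $\sigma^2_{\mathcal{D}}(\mathbf{x}_*) + \sigma_n^2$ must be positive. This holds automatically when $\sigma_n^2 > 0$, as in Proposition~\ref{prop:no_duplicate}.

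To make this self-contained, I will briefly justify Eq.~\ref{eq:mean_update} rather than just cite it. Write the augmented Gram matrix in block form with the new row and column for $\mathbf{x}_*$. Applying the Schur-complement (block inverse) formula to $\mu_{\mathcal{D}'}(\mathbf{x}) = m(\mathbf{x}) + \mathbf{k}'(\mathbf{x})^\top \mathbf{K}_y'^{-1}(\mathbf{y}' - \mathbf{m}')$ should give exactly the old mean plus $k_{\mathcal{D}}(\mathbf{x},\mathbf{x}_*)\,(\sigma^2_{\mathcal{D}}(\mathbf{x}_*)+\sigma_n^2)^{-1}(\tilde{y} - \mu_{\mathcal{D}}(\mathbf{x}_*))$. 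Here the weight uses the posterior covariance $k_{\mathcal{D}}$, consistent with the abuse of notation noted after Eq.~\ref{eq:var_update}. Only this routine block-inverse computation needs care; in particular one must confirm that the weight uses $k_{\mathcal{D}}$ rather than the prior kernel. Even that does not affect the conclusion, because the weight multiplies zero.

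Next, I will extend the result along the whole batch by induction on $j$. At step $j$, KB sets $\tilde{y}_j = \mu_{\mathcal{D}_j}(\mathbf{x}_j)$, which is the mean under the \emph{current} augmented dataset, so the same substitution applies. Hence $\mu_{\mathcal{D}_{j+1}} = \mu_{\mathcal{D}_j} = \cdots = \mu_{\mathcal{D}}$ throughout the batch. I will stress that the hyperparameters are held fixed, since Algorithm~\ref{alg:batch} under efficient conditioning performs no re-optimization; if hyperparameters were refit, the invariance could fail.

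Finally, for the ``purely from variance reduction'' statement: since $\mu$ is unchanged at every step, each acquisition function $\alpha(\mathbf{x}\mid\mathcal{D}_j)$ differs from $\alpha(\mathbf{x}\mid\mathcal{D})$ only through $\sigma_{\mathcal{D}_j}(\mathbf{x})$. By Eq.~\ref{eq:var_update}, $\sigma_{\mathcal{D}_j}(\mathbf{x})$ is non-increasing in $j$, and it strictly decreases wherever $k_{\mathcal{D}}(\mathbf{x},\mathbf{x}_*) \neq 0$. So every change in the acquisition landscape, and therefore every difference between successive maximizers, is attributable to the variance term. This is exactly the mechanism invoked in Propositions~\ref{prop:no_duplicate} and~\ref{prop:suppression_radius}.
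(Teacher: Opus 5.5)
Your proposal is correct and follows the paper's own argument. Substituting $\tilde{y} = \mu_{\mathcal{D}}(\mathbf{x}_*)$ into Eq.~\ref{eq:mean_update} kills the correction term, and the remaining acquisition change is attributed to Eq.~\ref{eq:var_update}; your block-inverse justification, the induction using $\tilde{y}_j = \mu_{\mathcal{D}_j}(\mathbf{x}_j)$, and the fixed-hyperparameter caveat are faithful elaborations of that one-line proof (note that both versions of the ``purely from variance reduction'' step implicitly treat $\alpha$ as a function of $(\mu,\sigma)$ alone, i.e.\ they assume the EI/PI incumbent $f^+$ is held fixed rather than recomputed over the pseudo-observations).
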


\begin{proof}
From Eq.~\ref{eq:mean_update}, setting $\tilde{y} = \mu_{\mathcal{D}}(\mathbf{x}_*)$ makes the correction term vanish. The mean is unchanged; acquisition reshaping occurs exclusively through Eq.~\ref{eq:var_update}.
\end{proof}

\begin{corollary}[Parametric Model Degeneracy]
\label{cor:parametric_degeneracy}
For any surrogate whose predictions are independent of the training set after parameter fitting, Algorithm~\ref{alg:batch} with a deterministic optimizer produces $\mathbf{x}_1 = \mathbf{x}_2 = \cdots = \mathbf{x}_q$.
\end{corollary}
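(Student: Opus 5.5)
The argument is an induction on the loop index $j$ of Algorithm~\ref{alg:batch}. The inductive claim is that the acquisition function presented to the optimizer at step $j$ is the same function at every step, and that the optimizer's output is determined by that function alone.

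First I will make the hypothesis precise. ``Predictions independent of the training set after parameter fitting'' means the following. The surrogate's parameters $\theta$ are fitted once, on the original data $\mathcal{D}$. Thereafter the maps $\mathbf{x}\mapsto\mu_\theta(\mathbf{x})$ and $\mathbf{x}\mapsto\sigma_\theta(\mathbf{x})$ depend only on $\theta$. Appending pseudo-observations $(\mathbf{x}_j,\tilde{y}_j)$ to $\mathcal{D}'$ triggers no refit. This is the neural-network scenario described in Section~\ref{sec:requirements}, and it is the opposite of the efficient-conditioning property.

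The second step handles the acquisition function. EI, UCB and PI are functionals of $(\mu,\sigma)$ and of an incumbent. I treat the incumbent as fixed by the real data $\mathcal{D}$ and not by the lies; I will state this explicitly as part of the setting. It is the point where the statement needs care, because a CL variant whose incumbent is updated using lie values could change $\alpha$. Under this convention, $\alpha(\cdot\mid\mathcal{D}_j)=\alpha(\cdot\mid\mathcal{D})$ for every $j$. The identity follows by induction: $\mathcal{D}_{j+1}$ differs from $\mathcal{D}_j$ only by a pseudo-observation, and by hypothesis that addition leaves $\mu_\theta$ and $\sigma_\theta$ unchanged. This holds whatever lie strategy is used (CL, KB, or fantasy), because the lie value never enters the predictions.

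The third step uses determinism. A deterministic optimizer is a map $\mathcal{O}$ from acquisition functions to points of $\mathcal{X}$: it has a fixed initialization, fixed seeds and fixed tie-breaking. Hence $\mathbf{x}_j=\mathcal{O}(\alpha(\cdot\mid\mathcal{D}_j))=\mathcal{O}(\alpha(\cdot\mid\mathcal{D}))=\mathbf{x}_1$ for all $j$, which proves the corollary.

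I expect the only real obstacle to be the modelling conventions, not any calculation. I would state as explicit assumptions that the incumbent is not updated by lies and that the optimizer is a function of $\alpha$ alone. I would also contrast the result with Proposition~\ref{prop:no_duplicate}: there, Eq.~\ref{eq:var_update} strictly lowers $\sigma$ at $\mathbf{x}_j$, whereas here the analogous change is identically zero.
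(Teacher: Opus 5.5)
Your proof is correct and follows the same route as the paper: there is no separate appendix proof, but the paper's justification, in the non-GP surrogates discussion of Section~\ref{sec:requirements}, is exactly that fixed fitted parameters give $\alpha(\cdot\mid\mathcal{D}')=\alpha(\cdot\mid\mathcal{D})$, so a deterministic optimizer returns the same maximizer at every step. Your explicit convention that the incumbent $f^+$ comes from the real data only is a sharpening the paper leaves implicit, and it is needed, since a KB or fantasy lie below $f^+$ would otherwise change EI or PI even with $\mu,\sigma$ frozen.
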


\begin{proposition}[Acquisition Function Generality]
\label{prop:acq_general}
Let $\alpha(\mu, \sigma)$ be any acquisition function monotonically non-decreasing in $\sigma$ for fixed $\mu$. Then Propositions~\ref{prop:no_duplicate}--\ref{prop:suppression_radius} hold with $\alpha$ in place of EI. This covers EI, UCB, and PI.
\end{proposition}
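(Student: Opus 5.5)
The plan is to show that the proofs of Propositions~\ref{prop:no_duplicate} and~\ref{prop:suppression_radius} use the acquisition function in only one place: the step where a decrease in posterior standard deviation at fixed mean is turned into a non-increase of $\alpha$. So I will rerun both arguments for a generic $\alpha(\mu,\sigma)$ that is non-decreasing in $\sigma$. Then I will check that EI, UCB and PI meet this hypothesis.

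\textbf{Step 1: decompose the conditioning update.} For one conditioning step I write the update as a composition of two effects.
\begin{itemize}
\item The \emph{variance update} of Eq.~\ref{eq:var_update} does not depend on $\tilde y$. It lowers $\sigma$ everywhere and strictly lowers it at $\mathbf{x}_*$ whenever $\sigma^2_{\mathcal{D}}(\mathbf{x}_*) > 0$, which holds by strict positive definiteness and $\sigma_n^2>0$.
\item The \emph{mean update} of Eq.~\ref{eq:mean_update} is the only other effect.
\end{itemize}
Under KB, Lemma~\ref{prop:kb_invariance} removes the mean update entirely. Then $\alpha(\mu_{\mathcal{D}'}(\mathbf{x}),\sigma_{\mathcal{D}'}(\mathbf{x})) \le \alpha(\mu_{\mathcal{D}}(\mathbf{x}),\sigma_{\mathcal{D}}(\mathbf{x}))$ pointwise, with the relevant reduction near $\mathbf{x}_*$ read off exactly as in the EI case. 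The quantitative radius bound of Proposition~\ref{prop:suppression_radius} concerns only $\sigma$, so it transfers verbatim. It becomes a statement about the region where $\alpha$ is evaluated at a $\sigma$ reduced by at least a $\tau$-fraction.

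\textbf{Step 2: handle the CL mean shift.} Under CL, the mean shift at $\mathbf{x}_*$ is $w(\mathbf{x}_*,\mathbf{x}_*)(\tilde y-\mu_{\mathcal{D}}(\mathbf{x}_*))$. The argument of Proposition~\ref{prop:no_duplicate} needs the new acquisition value at $\mathbf{x}_*$ to fall below the value at some competing point. I will handle this the way the EI proof presumably does: $\mathbf{x}_*$ is a maximizer, and after conditioning $\alpha$ at $\mathbf{x}_*$ is evaluated at $\sigma_{\mathcal{D}'}(\mathbf{x}_*)$, which is much smaller than before. I would add the mild requirement that $\alpha$ is also monotone in $\mu$ in the "pessimistic" direction for the chosen lie. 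Every standard CL choice meets this: CL-max for minimization moves the mean upward, which lowers EI, UCB (in its LCB form) and PI.

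Continuity of $\alpha$ in $(\mu,\sigma)$, together with continuity of the GP posterior in $\mathbf{x}$, extends the strict reduction at $\mathbf{x}_*$ to a neighborhood. The induction over $j$ is then unchanged.

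\textbf{Step 3: verify the three acquisition functions.} For each I write $z=(f^*-\mu)/\sigma$.
\begin{itemize}
\item \textbf{EI.} $\partial_\sigma \mathrm{EI} = \phi(z) \ge 0$.
\item \textbf{UCB/LCB.} $-\mu+\beta^{1/2}\sigma$ is linear in $\sigma$ with a nonnegative coefficient.
\item \textbf{PI.} $\Phi(z)$ is non-decreasing in $\sigma$ exactly when $f^*\le\mu$, i.e.\ $z\le 0$.
\end{itemize}

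\textbf{Main obstacle.} I expect PI to be the main obstacle. Its $\sigma$-monotonicity holds only where $\mu\ge f^*$, so I would restrict the claim to that region. The argument goes through because the pre-conditioning maximizer and its neighborhood satisfy this under the usual incumbent-plus-$\xi$ convention (taking $f^*$ as the incumbent minus $\xi>0$). A secondary subtlety is strict versus weak monotonicity. Proposition~\ref{prop:no_duplicate} needs the value at $\mathbf{x}_*$ to strictly drop below the new maximum. I would therefore require strict monotonicity at $\mathbf{x}_*$, which holds for all three functions there, or else settle for the weaker conclusion that any maximizer lies outside the suppression neighborhood.
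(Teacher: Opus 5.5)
Your route is the paper's: reduce everything to $\sigma$-monotonicity, and note that the radius bound of Proposition~\ref{prop:suppression_radius} involves only the kernel. Two of your additions improve on the paper's three-line proof: requiring strict monotonicity, and noting that $\sigma$-monotonicity does not control the CL mean shift. With merely non-decreasing $\alpha$ the statement is false, since $\alpha=-\mu$ under KB leaves the acquisition unchanged and gives $\mathbf{x}_2=\mathbf{x}_1$.

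\textbf{The central gap.} The decisive step is missing in your proposal, as in the paper. A strict drop of $\alpha$ at $\mathbf{x}_*$, extended by continuity to a ball, does \emph{not} imply that no maximizer of $\alpha(\cdot\mid\mathcal{D}')$ lies in that ball. Under KB, $\alpha$ goes down everywhere. For $\alpha$ strictly increasing in $\sigma$, the drop is strict wherever $k_{\mathcal{D}}(\mathbf{x},\mathbf{x}_*)\neq 0$, which for the SE kernel is almost everywhere. So a local drop says nothing about where the new argmax is. You correctly state what is needed: the value at $\mathbf{x}_*$ must fall below the value at some competing point. But ``$\sigma_{\mathcal{D}'}(\mathbf{x}_*)$ is much smaller than before'' does not produce that competitor.

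Here is a concrete counterexample. Take UCB $\alpha=-\mu+\epsilon\sigma$ under KB, with $\mu_{\mathcal{D}}$ having a unique minimizer $\mathbf{m}$. Suppose $\epsilon\sigma_f$ is smaller than the gap $\min_{\|\mathbf{x}-\mathbf{m}\|\ge\rho/2}\mu_{\mathcal{D}}(\mathbf{x})-\mu_{\mathcal{D}}(\mathbf{m})$. Because $0\le\sigma_{\mathcal{D}'}\le\sigma_{\mathcal{D}}\le\sigma_f$, both $\mathbf{x}_1$ and $\mathbf{x}_2$ then lie within $\rho/2$ of $\mathbf{m}$, for any $\rho>0$. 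So $\sigma$-monotonicity alone gives no fixed suppression neighborhood and no $\Omega(\ell)$ separation.

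What the argument needs is a quantitative comparison. For example, use $\sigma_{\mathcal{D}_j}(\mathbf{x}_i)\le\sigma_n$ for every earlier batch point; this persists because variance only decreases. Then show that $\alpha(\cdot\mid\mathcal{D}_j)$ at $\mathbf{x}_i$ is strictly below its maximum. Continuity then yields an exclusion ball, but its size is governed by that gap, and this is where $\beta$, the incumbent and $\sigma_n$ enter. For the same reason, Proposition~\ref{prop:suppression_radius} transfers only as a statement about $\sigma$; your rephrasing drops its separation conclusion rather than proving it.

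\textbf{Two incorrect patches.} For PI, your claim that the pre-conditioning maximizer lies in $\{\mu\ge f^*\}$ is backwards. PI is at most $1/2$ there and exceeds $1/2$ wherever $\mu<f^*$. So as soon as the posterior mean dips below $f^+-\xi$ anywhere, the maximizer sits where PI is strictly \emph{decreasing} in $\sigma$. Under KB with the incumbent held fixed (as the $\alpha(\mu,\sigma)$ formulation requires), the variance collapse then \emph{raises} PI at $\mathbf{x}_*$. PI is therefore not covered.

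For CL, your ``pessimistic lie'' condition fails for CL-min, which is the paper's main strategy and which you claim it covers. Here $\tilde y=\min_i y_i$ lies below $\mu_{\mathcal{D}}(\mathbf{x}_*)$ whenever $\mathbf{x}_*$ is not predicted to beat the incumbent. That is the typical case, for instance in the prior-dominated regime. Then $w(\mathbf{x},\mathbf{x}_*)(\tilde y-\mu_{\mathcal{D}}(\mathbf{x}_*))<0$ around $\mathbf{x}_*$. Since EI, UCB and PI are decreasing in $\mu$, this mean shift raises them there, working against the variance suppression. For CL-min only a quantitative trade-off between the two effects can decide the outcome.
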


\noindent\textit{Proof sketch.} The only property of EI used is monotonicity in $\sigma$. The strict variance reduction at $\mathbf{x}_1$ implies $\alpha$ decreases there for any such $\alpha$. See Appendix~\ref{app:proofs}. $\square$

\noindent\textbf{Remark (Thompson Sampling).} TS does not fit the $\alpha(\mu,\sigma)$ template since the posterior sample depends on the full covariance. Conditioning still shifts the TS argmax via reduced variance, but the formal $\Omega(\ell)$ separation guarantee does not transfer without further assumptions on the sample paths.

\subsection{Connection to Local Penalization}
\label{sec:local_penalization}

Local Penalization (LP) \cite{gonzalez2016batch} modifies the acquisition function with explicit repulsion: $\alpha_{\text{LP}}(\mathbf{x}) = \alpha(\mathbf{x}) \prod_j \varphi(\mathbf{x}; \mathbf{x}_j)$, where $\varphi \in [0,1]$ decreases near $\mathbf{x}_j$. We show that CL/KB implements \emph{implicit} LP through the GP posterior. Under KB (where the mean is unchanged by Lemma~\ref{prop:kb_invariance}), the ratio of updated to original EI serves as an implicit penalizer:
\begin{equation}
\psi(\mathbf{x}; \mathbf{x}_*) \triangleq \frac{\text{EI}_{\mathcal{D}'}(\mathbf{x})}{\text{EI}_{\mathcal{D}}(\mathbf{x})} \approx \frac{\sigma'(\mathbf{x})}{\sigma(\mathbf{x})} = \sqrt{1 - \frac{\sigma_f^4 \exp(-\|\mathbf{x} - \mathbf{x}_*\|^2 / \ell^2)}{\sigma^2(\mathbf{x})(\sigma^2(\mathbf{x}_*) + \sigma_n^2)}},
\label{eq:implicit_form}
\end{equation}
where the approximation $\text{EI}'/\text{EI} \approx \sigma'/\sigma$ is tight in the exploration-dominated regime ($\sigma\phi(Z) \gg (f^+ - \mu)\Phi(Z)$); in the exploitation regime the conditioned mean update (under CL) already pushes the acquisition away from $\mathbf{x}_*$. The penalty $\psi$ matches the profile of an LP penalizer: near $0$ close to $\mathbf{x}_*$, approaching $1$ far away. For CL, an additional mean-shift term gives qualitatively similar repulsion (Appendix~\ref{app:surrogate_analysis}). The key difference is that CL/KB inherits the kernel's smoothness implicitly, while LP requires explicit penalty shape and Lipschitz constant specification. Section~\ref{sec:lp_experiment} validates this empirically.

\subsection{Regret Implications}
\label{sec:regret}

The $\Omega(\ell)$-scale separation (Proposition~\ref{prop:suppression_radius}) has direct consequences for optimization performance. Batch GP-UCB achieves sublinear regret when the information hallucinated within a batch is bounded, which in turn requires diverse exploration within the batch~\cite{desautels2014parallelizing}, and a similar result holds for UCB-PE~\cite{contal2013parallel}. Our guarantee ensures exactly this: each pseudo-observation suppresses a region of radius $\Omega(\ell)$, preventing the pathological clustering that drives regret blowup. For GP surrogates, this connects our structural result directly to the diversity conditions used in existing sublinear regret bounds. Extending the regret analysis to other surrogates satisfying efficient conditioning is a natural next step.

\subsection{Unification of Pseudo-Observation Strategies}
\label{sec:unification}

CL, KB, and \emph{fantasy models} \cite{snoek2012practical, wilson2018maximizing, balandat2020botorch} are all instances of a single conditioning mechanism differing only in the lie value $\tilde{y}$: CL uses a fixed constant, KB uses $\mu_{\mathcal{D}}(\mathbf{x}_*)$, and fantasy models sample $\tilde{y} \sim \mathcal{N}(\mu_{\mathcal{D}}(\mathbf{x}_*), \sigma^2_{\mathcal{D}}(\mathbf{x}_*) + \sigma_n^2)$. Since the variance update (Eq.~\ref{eq:var_update}) is \emph{independent of} $\tilde{y}$, all three produce identical suppression geometry, differing only in mean shift: zero for KB (Lemma~\ref{prop:kb_invariance}), deterministic for CL, stochastic for fantasy models. CL/KB and DPP-based methods \cite{kathuria2016batched} both implement kernel-scaled repulsion, though through different machinery (sequential conditioning vs.\ probabilistic subset selection; see Appendix~\ref{app:surrogate_analysis}), placing CL/KB, LP, and DPPs on a shared conceptual spectrum of repulsion-based batch diversity.

\section{Experiments}
\label{sec:experiments}

We validate the theoretical analysis through experiments on synthetic benchmarks and a real-world task. Diversity values are mean pairwise Euclidean distances in $[0,1]^d$. Additional experiments (1D visualizations, retraining analysis, acquisition agnosticism, multi-seed robustness) are in Appendix~\ref{app:full_experiments}.

\begin{figure}[t]
\centering
\includegraphics[width=\linewidth]{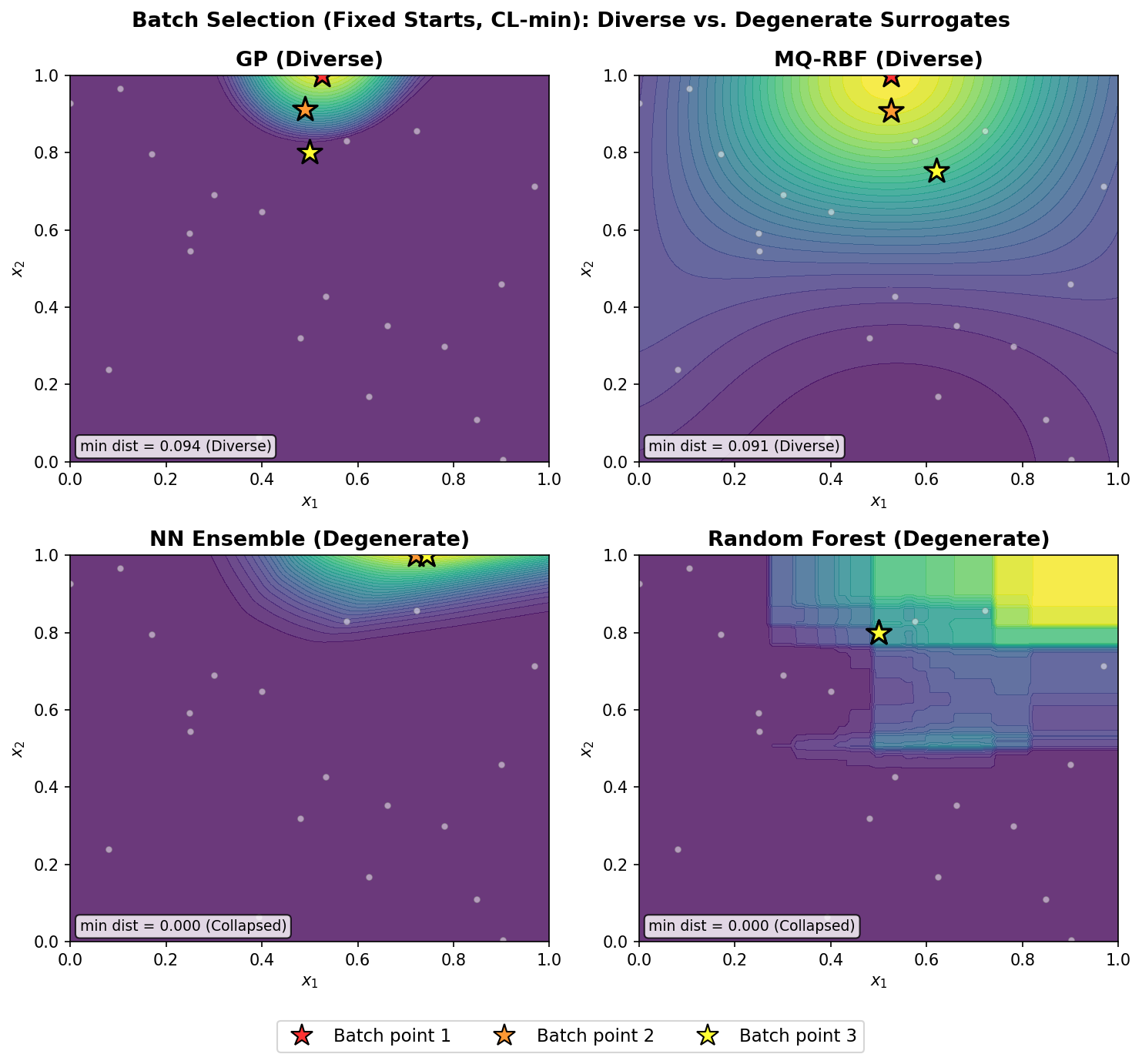}
\caption{Overview: batch selection with CL-min and fixed optimizer starts on a 2D test function ($q{=}3$). GP and MQ-RBF produce diverse batches (top); NN and RF collapse to the same location (bottom).}
\label{fig:overview}
\end{figure}

\subsection{Structural Diversity Diagnostic}
\label{sec:diagnostic}

Corollary~\ref{cor:parametric_degeneracy} predicts that surrogates lacking efficient conditioning produce degenerate batches under a deterministic optimizer. We verify this with the \textbf{Structural Diversity Diagnostic} (SDD): three deterministic L-BFGS-B starting points (placed at $0.2\mathbf{1}_d$, $0.5\mathbf{1}_d$, $0.8\mathbf{1}_d$ along the diagonal of $[0,1]^d$, where $\mathbf{1}_d$ is the all-ones vector) are held identical across batch iterations, so any observed diversity must arise from the model's structural response to pseudo-observations.

Table~\ref{tab:sdd} confirms the prediction on Hartmann-6D ($n_\text{init}{=}30$): GP and MQ-RBF produce diverse batches; NN and RF collapse (distance $= 0$), exactly as the theory requires. Pre-drawn random restarts do not rescue NN/RF (Appendix~\ref{app:full_experiments}), and the classification is perfectly robust across 5 seeds with different Latin Hypercube Sampling (LHS) initializations (Appendix, Table~\ref{tab:multi_seed}).

\begin{table}[t]
\centering
\caption{Structural Diversity Diagnostic on Hartmann-6D.}
\label{tab:sdd}
\small
\begin{tabular}{lcccl}
\toprule
\textbf{Surrogate} & \textbf{$q$} & \textbf{Min Dist} & \textbf{Mean Dist} & \textbf{Diverse?} \\
\midrule
GP                & 2 & 0.300 & 0.300 & Yes \\
MQ-RBF            & 2 & 0.421 & 0.421 & Yes \\
NN Ensemble       & 2 & 0.000 & 0.000 & No \\
Random Forest     & 2 & 0.000 & 0.000 & No \\
\midrule
GP                & 3 & 0.157 & 0.209 & Yes \\
MQ-RBF            & 3 & 0.421 & 0.909 & Yes \\
NN Ensemble       & 3 & 0.000 & 0.000 & No \\
Random Forest     & 3 & 0.000 & 0.000 & No \\
\bottomrule
\end{tabular}
\end{table}

\subsection{Batch BO Convergence}

We evaluate batch BO on Hartmann-6D, Ackley-8D, and Levy-10D, comparing CL-min, CL-max, CL-mean, KB, random batch, and sequential ($q{=}1$) with $q \in \{2, 3\}$, 20 seeds (10 for Levy), budget of 50 evaluations, and $2d$ LHS initialization. We focus on $q \leq 3$, the regime where pseudo-observation strategies are principally deployed \cite{chevalier2013fast, azimi2012hybrid}. Figure~\ref{fig:conv} shows Hartmann-6D convergence: CL-min and KB achieve near-sequential performance at equal evaluation budgets, confirming that pseudo-observation batch selection maintains sample efficiency; at equal \emph{wall-clock time} (same number of BO iterations), batch methods are strictly superior since they evaluate $q$ points per iteration.\footnote{Wilcoxon signed-rank tests (Hartmann-6D, $q{=}3$, 20 seeds): CL-min vs.\ KB $p = 0.37$ (n.s.); CL-min vs.\ Random $p < 0.001$; KB vs.\ Random $p < 0.001$. Similar patterns hold across benchmarks.} CL-max converges more slowly; random batch is worst. Ackley-8D and Levy-10D results (Appendix~\ref{app:full_experiments}) show consistent patterns.

\begin{figure}[t]
\centering
\includegraphics[width=\linewidth]{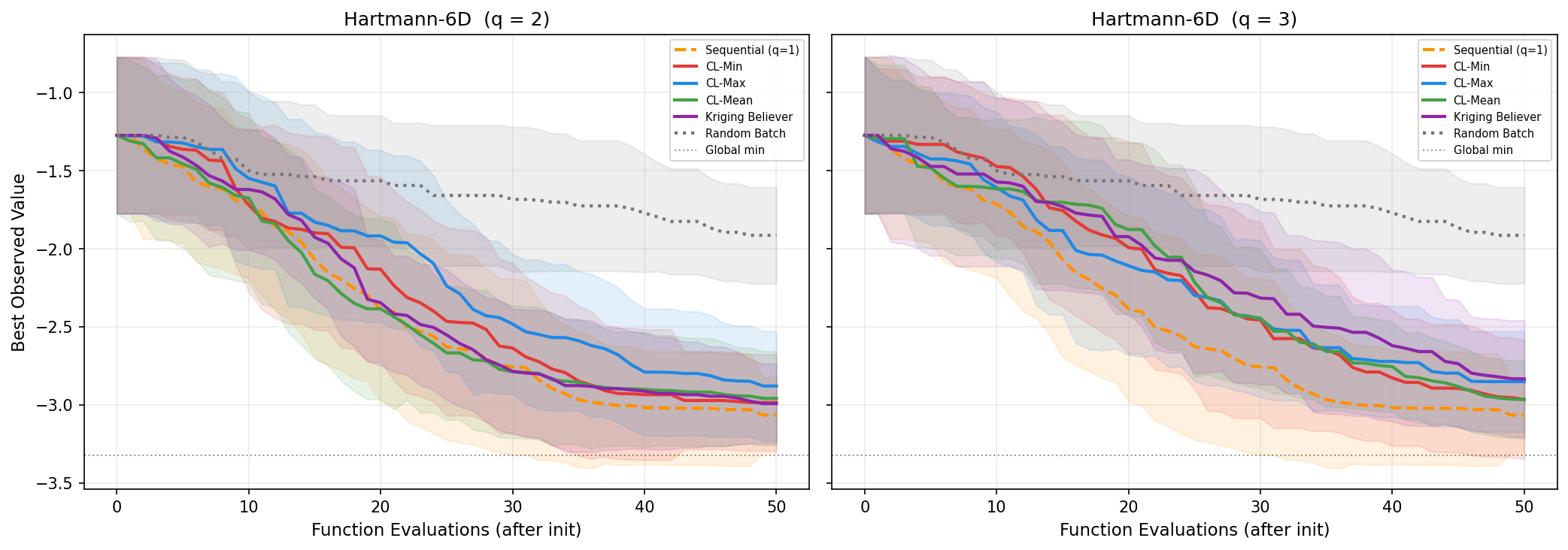}
\caption{Convergence on Hartmann-6D (20 seeds, $\pm$1 std). CL-min and KB achieve near-sequential performance with batch parallelism.}
\label{fig:conv}
\end{figure}

\subsection{Computational Cost}

Table~\ref{tab:timing} shows wall-clock time per batch step ($n{=}50$, $d{=}6$, $q{=}3$). GP conditioning is ${\sim}250\times$ faster than NN retraining; MQ-RBF is ${\sim}3000\times$ faster. End-to-end loop times (Appendix, Table~\ref{tab:loop_timing}) show CL/KB loops complete in under 6 seconds, within the same order of magnitude as BoTorch's q-EI.\footnote{Table~\ref{tab:timing} measures the conditioning/retraining step alone. Per-batch times in Table~\ref{tab:qei} include GP fitting once per BO iteration, acquisition optimization with 10 L-BFGS-B restarts, and $q{-}1$ conditioning steps. Table~\ref{tab:loop_timing} reports full BO loop wall-clock.}

\begin{table}[t]
\centering
\caption{Wall-clock time per batch step. Efficient conditioning is $250$--$3000\times$ faster than retraining.}
\label{tab:timing}
\small
\begin{tabular}{lrr}
\toprule
\textbf{Surrogate} & \textbf{Time/step (s)} & \textbf{vs.\ NN retrain} \\
\midrule
GP (condition)            & $0.0004 \pm 0.0002$ & $253\times$ faster \\
MQ-RBF (re-solve)         & $< 0.0001$           & $3124\times$ faster \\
NN Ensemble (retrain)      & $0.103 \pm 0.028$   & $1\times$ (baseline) \\
Random Forest (rebuild)    & $0.058 \pm 0.003$   & $1.8\times$ faster \\
\bottomrule
\end{tabular}
\end{table}

\subsection{Real-World Benchmark: SVM Hyperparameter Tuning}

We optimize SVM hyperparameters (4D) on the Breast Cancer dataset (569 samples), with $n_\text{init} = 10$, budget $= 30$, $q{=}2$, over 10 seeds. GP + CL-min achieves the lowest mean error ($0.0188 \pm 0.0014$) versus sequential ($0.0193 \pm 0.0008$) and random ($0.0197 \pm 0.0015$); differences against sequential are within one standard deviation, but CL-min provides the additional benefit of $2\times$ wall-clock parallel speedup. Details and convergence figure in Appendix~\ref{app:full_experiments}.

\subsection{Local Penalization Comparison}
\label{sec:lp_experiment}

We compare CL-min, KB, and explicit LP \cite{gonzalez2016batch} (penalty: $\varphi = 1 - \exp(-\|\mathbf{x} - \mathbf{x}_j\|^2 / 2\ell^2)$) on Hartmann-6D and Ackley-8D (20 seeds, $q \in \{2,3\}$). Table~\ref{tab:lp_comparison} shows CL-min and KB achieve the best final values (CL-min leads on Hartmann-6D; KB on Ackley-8D $q{=}2$), with LP slightly trailing; all three significantly outperform random batch. This validates Section~\ref{sec:local_penalization}: CL/KB's implicit penalty is at least as effective as LP's explicit penalty while requiring no Lipschitz constant estimation. Full results in Appendix~\ref{app:full_experiments}.

\begin{table}[t]
\centering
\caption{CL/KB vs.\ Local Penalization: final best (mean $\pm$ std, 20 seeds).}
\label{tab:lp_comparison}
\small
\begin{tabular}{llcc}
\toprule
& & \textbf{Hartmann-6D} & \textbf{Ackley-8D} \\
\textbf{Strategy} & $q$ & Final Best & Final Best \\
\midrule
CL-Min             & 2 & $\mathbf{-3.08 \pm 0.32}$ & $4.34 \pm 0.76$ \\
Kriging Believer   & 2 & $-2.97 \pm 0.25$ & $\mathbf{4.23 \pm 0.84}$ \\
Local Penalization & 2 & $-2.79 \pm 0.28$ & $4.79 \pm 0.82$ \\
Random Batch       & 2 & $-1.91 \pm 0.31$ & $7.11 \pm 0.72$ \\
\midrule
CL-Min             & 3 & $\mathbf{-3.02 \pm 0.30}$ & $\mathbf{4.60 \pm 1.02}$ \\
Kriging Believer   & 3 & $-2.83 \pm 0.41$ & $4.99 \pm 0.93$ \\
Local Penalization & 3 & $-2.60 \pm 0.50$ & $5.20 \pm 0.87$ \\
Random Batch       & 3 & $-1.91 \pm 0.31$ & $7.11 \pm 0.72$ \\
\midrule
Sequential ($q{=}1$)     & 1 & $-2.98 \pm 0.34$ & $4.08 \pm 1.23$ \\
\bottomrule
\end{tabular}
\end{table}

\subsection{$q$-EI Joint Optimization Comparison}
\label{sec:qei_experiment}

We compare CL-min and KB against BoTorch's \texttt{qExpectedImprovement}~\cite{balandat2020botorch} on Hartmann-6D ($q{=}3$, 20 seeds).\footnote{Tables~\ref{tab:lp_comparison} and~\ref{tab:qei} report different absolute values for CL-min and KB because the two experiments use different seed sets and surrogate stacks (LP comparison: scikit-learn GP + custom L-BFGS-B; q-EI comparison: BoTorch \texttt{SingleTaskGP} with the seed protocol shared with q-EI). Differences in the CL-min vs.\ KB ordering between the two tables are within one within-method standard deviation ($0.29$--$0.54$), consistent with the noise-level discussion below.} Table~\ref{tab:qei} shows KB achieves performance indistinguishable from q-EI ($-2.99$ vs.\ $-2.97$), with comparable diversity. Greedy conditioning loses little relative to joint optimization, while remaining simpler and model-agnostic; within-method standard deviations ($0.29$--$0.54$) are comparable to the between-method gaps, so ranking differences across Tables~\ref{tab:lp_comparison} and~\ref{tab:qei} are within noise.

\begin{table}[t]
\centering
\caption{CL/KB vs.\ q-EI (BoTorch) on Hartmann-6D ($q{=}3$, 20 seeds).}
\label{tab:qei}
\small
\begin{tabular}{lccc}
\toprule
\textbf{Strategy} & \textbf{Final Best} & \textbf{Diversity} & \textbf{Time/batch (s)} \\
\midrule
CL-Min (ours)         & $-2.74 \pm 0.54$ & $0.79 \pm 0.35$ & 0.22 \\
Kriging Believer (ours) & $\mathbf{-2.99 \pm 0.29}$ & $0.95 \pm 0.32$ & 0.16 \\
q-EI (BoTorch)        & $-2.97 \pm 0.39$ & $0.95 \pm 0.25$ & 0.09 \\
\bottomrule
\end{tabular}
\end{table}

\subsection{Noisy Observations}
\label{sec:noisy_experiment}

We repeat the Hartmann-6D experiment ($q{=}3$, 20 seeds) with $\sigma_\text{noise} = 0.1 \cdot \text{std}(y_\text{init})$. Table~\ref{tab:noisy} shows that CL-min and KB continue to produce diverse, effective batches. Diversity is largely unchanged (CL-min: $0.74$ noisy vs.\ $0.79$ noise-free in Table~\ref{tab:qei}). Convergence degrades slightly, as expected, but the gap over random batch selection remains large (Wilcoxon $p < 0.001$).

\begin{table}[t]
\centering
\caption{Noisy Hartmann-6D ($\sigma_\text{noise}{=}0.1 \cdot \text{std}(y)$, $q{=}3$, 20 seeds).}
\label{tab:noisy}
\small
\begin{tabular}{lcc}
\toprule
\textbf{Strategy} & \textbf{Final Best} & \textbf{Diversity} \\
\midrule
CL-Min             & $\mathbf{-2.95 \pm 0.29}$ & $0.74 \pm 0.12$ \\
Kriging Believer   & $-2.90 \pm 0.50$ & $0.95 \pm 0.07$ \\
Random Batch       & $-1.88 \pm 0.35$ & $0.97 \pm 0.04$ \\
\bottomrule
\end{tabular}
\end{table}

\section{Related Work}
\label{sec:related}

Ginsbourger et al.\ introduced CL and KB~\cite{ginsbourger2010kriging, ginsbourger2008multi}; subsequent work focused on regret guarantees~\cite{contal2013parallel, desautels2014parallelizing}, asynchronous extensions~\cite{alvi2019asynchronous}, and high-dimensional settings~\cite{wang2018batched}, treating CL/KB as black-box procedures without characterizing surrogate requirements. Alternative batch methods include LP~\cite{gonzalez2016batch} (explicit repulsion), $q$-EI~\cite{chevalier2013fast, wilson2018maximizing} (joint optimization), DPPs~\cite{kathuria2016batched} (repulsive kernels), Thompson sampling~\cite{kandasamy2018parallelised} (posterior sampling), SOBER~\cite{adachi2023sober} (hallucination-free), and subspace-based scalable batch BO~\cite{zhan2024essi}; our analysis explains \emph{why} the greedy CL/KB decomposition works and is complementary to these methods. On surrogates, GPs dominate BO~\cite{rasmussen2006gaussian, shahriari2015taking}, with alternatives including random forests (SMAC~\cite{hutter2011sequential}), deep NNs (DNGO~\cite{snoek2015scalable}), and sparse GPs~\cite{hensman2013gaussian}; BoTorch's~\cite{balandat2020botorch} ``fantasy models'' implicitly rely on the efficient conditioning property we formalize, while prior comparisons~\cite{eggensperger2013towards} focus on predictive accuracy rather than batch diversity.

\section{Limitations}
\label{sec:limitations}

\begin{itemize}
\item \textbf{Fixed hyperparameters.} Our analysis assumes kernel hyperparameters remain fixed during batch construction; experiments confirm effectiveness with per-batch optimization, but interaction with hyperparameter adaptation deserves further study.
\item \textbf{MQ-RBF uncertainty.} The MQ-RBF uncertainty estimate uses a power-function heuristic (Appendix~\ref{app:mqrbf_uncertainty}) rather than a probabilistic model, which suffices for demonstrating efficient conditioning but lacks GP-level calibration.
\item \textbf{Batch size regime.} The main experiments focus on $q \in \{2, 3\}$, consistent with greedy methods yielding diminishing returns beyond $q \approx 5$~\cite{azimi2012hybrid} and exact $q$-EI becoming intractable for $q \geq 10$~\cite{chevalier2013fast}. An additional $q{=}10$ experiment (Appendix~\ref{app:large_batch}) confirms the diversity mechanism scales, though convergence efficiency degrades for later batch members as predicted by Proposition~\ref{prop:suppression_radius}.
\item \textbf{Benchmark scope.} We cover dimensions 6--10 and a real-world task; extension to $d > 20$ or very high noise remains open (Section~\ref{sec:noisy_experiment} confirms robustness under moderate noise).
\end{itemize}

\section{Conclusion}
\label{sec:conclusion}

We identified \emph{efficient conditioning} as the key surrogate property for pseudo-observation batch selection. Batch points are provably distinct with separation of order $\ell$ (Propositions~\ref{prop:no_duplicate}--\ref{prop:suppression_radius}) for any $\sigma$-monotone acquisition function (Proposition~\ref{prop:acq_general}), unifying CL, KB, and fantasy models with bridges to LP and DPPs. Efficient conditioning extends to MQ-RBF networks, while parametric surrogates remain degenerate even when retrained (RFs) or at $15\times$ GP cost (NNs). The SDD provides a reusable diagnostic robust across seeds, acquisition functions, and noise; CL/KB matches or outperforms LP across benchmarks and is competitive with joint $q$-EI within noise. These findings establish efficient conditioning as a fundamental organizing principle for batch Bayesian Optimization. All hyperparameters, protocols, and compute resources are detailed in Appendices~\ref{app:hyperparams}--\ref{app:compute}.

\bibliographystyle{plainnat}
\bibliography{references}

\appendix
\section{Extended Background}
\label{app:extended_background}

\subsection{Gaussian Process Regression}

A Gaussian Process defines a distribution over functions $f: \mathcal{X} \rightarrow \mathbb{R}$, specified by a mean function $m(\mathbf{x})$ and covariance kernel $k(\mathbf{x}, \mathbf{x}')$ \cite{rasmussen2006gaussian}:
\begin{equation}
f(\mathbf{x}) \sim \mathcal{GP}(m(\mathbf{x}), k(\mathbf{x}, \mathbf{x}')).
\end{equation}

Given dataset $\mathcal{D} = \{(\mathbf{x}_i, y_i)\}_{i=1}^n$, define:
\begin{align}
\mathbf{X} &= [\mathbf{x}_1, \ldots, \mathbf{x}_n]^\top, \\
\mathbf{y} &= [y_1, \ldots, y_n]^\top, \\
\mathbf{m} &= [m(\mathbf{x}_1), \ldots, m(\mathbf{x}_n)]^\top.
\end{align}

Let the kernel matrix $\mathbf{K} \in \mathbb{R}^{n \times n}$ with $K_{ij} = k(\mathbf{x}_i, \mathbf{x}_j)$, and the regularized system $\mathbf{K}_y = \mathbf{K} + \sigma_n^2 \mathbf{I}$ where $\sigma_n^2$ is the observation noise variance. For test point $\mathbf{x}_*$, the covariance vector is $\mathbf{k}_* = [k(\mathbf{x}_*, \mathbf{x}_1), \ldots, k(\mathbf{x}_*, \mathbf{x}_n)]^\top$.

The posterior predictive distribution is Gaussian:
\begin{align}
\mu(\mathbf{x}_*) &= m(\mathbf{x}_*) + \mathbf{k}_*^\top \boldsymbol{\alpha}, \\
\sigma^2(\mathbf{x}_*) &= k(\mathbf{x}_*, \mathbf{x}_*) - \mathbf{k}_*^\top \mathbf{K}_y^{-1} \mathbf{k}_*,
\end{align}
where $\boldsymbol{\alpha}$ solves $\mathbf{K}_y \boldsymbol{\alpha} = \mathbf{y} - \mathbf{m}$.

The predictive mean is a kernel-weighted linear combination of observed targets, while the variance quantifies uncertainty based on proximity to observed data. In the noise-free limit ($\sigma_n^2 \rightarrow 0$), the GP interpolates the data.

\subsection{Acquisition Functions}

BO selects evaluation points by maximizing $\alpha: \mathcal{X} \rightarrow \mathbb{R}$, combining $\mu(\mathbf{x})$ and $\sigma(\mathbf{x})$ to balance exploration and exploitation \cite{shahriari2015taking}.

Expected Improvement (EI) \cite{jones1998efficient} measures expected improvement over $f^+ = \min_i y_i$:
\begin{equation}
\text{EI}(\mathbf{x}) = (f^+ - \mu(\mathbf{x})) \Phi(Z) + \sigma(\mathbf{x}) \phi(Z),
\end{equation}
where $Z = (f^+ - \mu(\mathbf{x})) / \sigma(\mathbf{x})$.

Upper Confidence Bound (UCB) \cite{srinivas2010gaussian}: in the original maximization formulation, $\text{UCB}(\mathbf{x}) = \mu(\mathbf{x}) + \sqrt{\beta}\,\sigma(\mathbf{x})$. For minimization (our setting), Algorithm~\ref{alg:batch} maximizes the analogous acquisition $\alpha_{\text{UCB}}(\mathbf{x}) = -\mu(\mathbf{x}) + \sqrt{\beta}\,\sigma(\mathbf{x})$, equivalent to minimizing the lower confidence bound $\text{LCB}(\mathbf{x}) = \mu(\mathbf{x}) - \sqrt{\beta}\,\sigma(\mathbf{x})$. We refer to this acquisition as ``UCB'' throughout for brevity. Both the maximization and minimization forms are non-decreasing in $\sigma$ for fixed $\mu$, satisfying the premise of Proposition~\ref{prop:acq_general}.

EI and UCB both depend explicitly on $\mu(\mathbf{x})$ and $\sigma(\mathbf{x})$; modifications to $\mathcal{D}$ that alter these quantities reshape the acquisition landscape.

\section{Extended Proofs}
\label{app:proofs}

\subsection{Proposition~\ref{prop:no_duplicate}: No-Duplicate Guarantee}

\begin{proof}
We prove by induction on batch index $j$. Let $\mathcal{D}_j = \mathcal{D} \cup \{(\mathbf{x}_i, \tilde{y}_i)\}_{i=1}^{j-1}$ and $\mathcal{D}_{j+1} = \mathcal{D}_j \cup \{(\mathbf{x}_j, \tilde{y}_j)\}$.

\textbf{Step 1: Variance reduction.} From Eq.~\ref{eq:var_update}:
\begin{equation}
\sigma^2_{\mathcal{D}_{j+1}}(\mathbf{x}_j) = \frac{\sigma^2_{\mathcal{D}_j}(\mathbf{x}_j) \cdot \sigma_n^2}{\sigma^2_{\mathcal{D}_j}(\mathbf{x}_j) + \sigma_n^2} < \sigma^2_{\mathcal{D}_j}(\mathbf{x}_j).
\end{equation}

\textbf{Step 2: Acquisition reduction.} For any $\alpha$ non-decreasing in $\sigma$, the strict decrease in $\sigma_{\mathcal{D}_{j+1}}(\mathbf{x}_j)$ implies $\alpha(\mathbf{x}_j \mid \mathcal{D}_{j+1}) < \alpha(\mathbf{x}_j \mid \mathcal{D}_j)$.

\textbf{Step 3: Continuity.} By continuity of $k$, suppression extends to an open neighborhood $B_\delta(\mathbf{x}_j)$. Compactness of $\mathcal{X}$ ensures every maximizer of $\alpha(\cdot \mid \mathcal{D}_{j+1})$ lies outside $B_\delta(\mathbf{x}_j)$. When unique, $\mathbf{x}_{j+1}$ is uniquely determined; otherwise the conclusion holds for every maximizer.
\end{proof}

\subsection{Proposition~\ref{prop:suppression_radius}: Variance Suppression Radius}

\begin{proof}
The exact rank-one variance update uses the posterior covariance $k_{\mathcal{D}}(\mathbf{x}, \mathbf{x}_*)$ in the numerator of Eq.~\ref{eq:var_update}. Under the prior-dominated assumption ($\sigma^2_{\mathcal{D}}(\mathbf{x}_*) \approx \sigma_f^2$, i.e., $\mathbf{x}_*$ is sufficiently far from training data), $k_{\mathcal{D}}(\mathbf{x}, \mathbf{x}_*) \approx k(\mathbf{x}, \mathbf{x}_*)$, so for the SE kernel $k(\mathbf{x}, \mathbf{x}_*)^2 = \sigma_f^4 \exp(-\|\mathbf{x} - \mathbf{x}_*\|^2/\ell^2)$. The variance reduction is:
\begin{equation}
\Delta \sigma^2(\mathbf{x}) = \frac{\sigma_f^4 \exp(-\|\mathbf{x} - \mathbf{x}_*\|^2/\ell^2)}{\sigma^2_{\mathcal{D}}(\mathbf{x}_*) + \sigma_n^2}.
\end{equation}
The fractional reduction:
\begin{equation}
\frac{\Delta\sigma^2(\mathbf{x})}{\sigma^2_{\mathcal{D}}(\mathbf{x})} = \frac{\sigma_f^4 \exp(-\|\mathbf{x} - \mathbf{x}_*\|^2/\ell^2)}{\sigma^2_{\mathcal{D}}(\mathbf{x})(\sigma^2_{\mathcal{D}}(\mathbf{x}_*) + \sigma_n^2)}.
\end{equation}
Setting this $\geq \tau$ and using $\sigma^2_{\mathcal{D}}(\mathbf{x}) \leq \sigma^2_{\mathcal{D}}(\mathbf{x}_*)$ (since $\mathbf{x}_*$ is a local variance maximizer; this substitution gives a conservative lower bound on $r_\tau$):
\begin{equation}
r_\tau \geq \ell\sqrt{\log\!\left(\frac{\sigma_f^4}{\tau \cdot \sigma^2_{\mathcal{D}}(\mathbf{x}_*)(\sigma^2_{\mathcal{D}}(\mathbf{x}_*) + \sigma_n^2)}\right)}.
\end{equation}
For low noise and $\sigma^2_\mathcal{D}(\mathbf{x}_*) \approx \sigma_f^2$: $r_{0.5} \geq 0.83\ell$, confirming $\Omega(\ell)$ separation.
\end{proof}

\subsection{Proposition~\ref{prop:acq_general}: Acquisition Function Generality}

\begin{proof}
The only property of EI used in Proposition~\ref{prop:no_duplicate} is monotonicity in $\sigma$: the strict variance reduction at $\mathbf{x}_1$ (Eq.~\ref{eq:var_update}) implies $\alpha(\mu_{\mathcal{D}'}(\mathbf{x}_1), \sigma_{\mathcal{D}'}(\mathbf{x}_1)) < \alpha(\mu_{\mathcal{D}}(\mathbf{x}_1), \sigma_{\mathcal{D}}(\mathbf{x}_1))$ for any $\alpha$ non-decreasing in $\sigma$. The continuity and compactness arguments follow identically. Proposition~\ref{prop:suppression_radius} depends only on the kernel.
\end{proof}

\subsection{Connection to Local Penalization: Derivation Details}

Under KB, the mean is invariant (Lemma~\ref{prop:kb_invariance}), so $\psi$ depends only on the variance ratio:
\begin{align}
\psi(\mathbf{x};\mathbf{x}_*) &= \frac{\text{EI}(\mu, \sigma')}{\text{EI}(\mu, \sigma)} \approx \frac{\sigma'}{\sigma} = \sqrt{1 - \frac{\delta\sigma^2}{\sigma^2(\mathbf{x})}},
\end{align}
where $\delta\sigma^2 = k(\mathbf{x},\mathbf{x}_*)^2/(\sigma^2(\mathbf{x}_*) + \sigma_n^2)$, giving the form in Eq.~\ref{eq:implicit_form}. The approximation $\text{EI}'/\text{EI} \approx \sigma'/\sigma$ is tight in the exploration-dominated regime ($\sigma\phi(Z) \gg (f^+ - \mu)\Phi(Z)$); in the exploitation regime the ratio is closer to $1$. Unlike LP, the implicit penalty adapts automatically to the posterior through the kernel and current posterior variance.

\section{Additional Surrogate Analysis}
\label{app:surrogate_analysis}

\subsection{Support Vector Machines}

SVMs learn a decision function $f(\mathbf{x}) = \sum_{i \in SV} \alpha_i k(\mathbf{x}, \mathbf{x}_i) + b$ by solving a quadratic program. Adding a new observation requires re-solving the QP, which is not a closed-form update. SVMs also lack native uncertainty estimates required for acquisition functions.

\subsection{Sparse and Scalable Gaussian Processes}

Inducing-point methods (SVGP, FITC, VFE \cite{hensman2013gaussian}) approximate the full GP with $m \ll n$ pseudo-inputs. With \emph{fixed} inducing points, adding data can be performed efficiently through the low-rank structure, but the fixed basis may not capture local structure effectively. With \emph{updated} inducing points, re-optimization violates efficient conditioning. Sparse GPs are partially compatible; empirical investigation of SVGP-based CL/KB remains open.

\subsection{Lie Strategy Variants}

For models satisfying efficient conditioning, the choice of $\tilde{y}$ determines the acquisition landscape modification:
\begin{itemize}
    \item \textbf{CL-min:} $\tilde{y} = \min_i y_i$. The mean decreases locally, reinforcing the variance reduction and producing strong suppression that encourages exploration.
    \item \textbf{CL-max:} $\tilde{y} = \max_i y_i$. The mean increases locally, partially offsetting variance reduction and creating conflicting signals.
    \item \textbf{CL-mean:} $\tilde{y} = \bar{y}$. Moderate mean shift with balanced behavior.
    \item \textbf{KB:} $\tilde{y} = \mu(\mathbf{x}_j)$. Zero mean shift (Lemma~\ref{prop:kb_invariance}), resulting in pure variance-driven diversification.
\end{itemize}

\subsection{Quantitative EI Suppression}

Using Eqs.~\ref{eq:mean_update}--\ref{eq:var_update}, define $\delta\mu(\mathbf{x}) = w(\mathbf{x}, \mathbf{x}_*)(\tilde{y} - \mu(\mathbf{x}_*))$ and $\delta\sigma^2(\mathbf{x}) = k(\mathbf{x}, \mathbf{x}_*)^2 / (\sigma^2(\mathbf{x}_*) + \sigma_n^2)$. For CL-min, the first-order EI change is:
\begin{equation}
\Delta\text{EI}(\mathbf{x}) \approx -\delta\mu(\mathbf{x}) \Phi(Z) - \frac{\delta\sigma^2(\mathbf{x})}{2\sigma(\mathbf{x})}\phi(Z),
\end{equation}
where $Z = (f^+ - \mu(\mathbf{x}))/\sigma(\mathbf{x})$. The suppression decays as $\exp(-\|\mathbf{x} - \mathbf{x}_*\|^2/\ell^2)$, confirming the lengthscale governs the repulsion radius.

\subsection{Connection to Determinantal Point Processes}
\label{app:dpp_connection}

DPPs \cite{kathuria2016batched} define repulsive distributions with $\Pr(S) \propto \det(\mathbf{L}_S)$. Under CL/KB, the effective batch acquisition can be written as $\alpha_{\text{batch}} = \alpha(\mathbf{x}_1) \prod_{j=2}^{q} \psi(\mathbf{x}_j; \mathbf{x}_{<j})$. Under KB with the SE kernel, $\log \psi \approx -\frac{1}{2} k(\mathbf{x}_j, \mathbf{x}_i)^2 / [\sigma^2(\mathbf{x}_i)(\sigma^2(\mathbf{x}_i) + \sigma_n^2)]$, penalizing kernel-correlated pairs in a manner qualitatively similar to DPP log-determinant repulsion. Both scale with kernel correlation and vanish for well-separated points, but operate through different mechanisms: CL/KB's repulsion emerges from sequential conditioning, while DPPs optimize a combinatorial objective. We do not claim exact L-ensemble equivalence; the functional forms differ (pairwise sum vs.\ log-determinant with higher-order interactions).

\subsection{Efficient Conditioning vs.\ Retraining}
\label{app:conditioning_vs_retraining}

GP conditioning produces deterministic, smooth, spatially localized updates governed by the kernel (Eq.~\ref{eq:var_update}). Retraining a neural network, by contrast, is: (1) \emph{non-local}, as gradient updates affect all weights globally; (2) \emph{stochastic}, since different initializations yield different landscapes; and (3) \emph{non-monotone in distance}, with no guarantee that prediction change decreases with $\|\mathbf{x} - \mathbf{x}_*\|$. Even retrained NNs lack the structured suppression profile of GP conditioning (Section~\ref{sec:requirements}). For random forests, bootstrap sampling dilutes one pseudo-observation across 200 trees, often producing insufficient prediction change. Experiment~7a (Table~\ref{tab:retrained}) validates both predictions.

\section{MQ-RBF Uncertainty: Power-Function Analog}
\label{app:mqrbf_uncertainty}

An MQ-RBF interpolant with data-dependent centers yields $s(\mathbf{x}) = \sum_{i=1}^{n} w_i \phi(\|\mathbf{x} - \mathbf{x}_i\|)$ where $\phi(r) = \sqrt{1 + r^2/\epsilon^2}$. For conditionally positive definite basis functions, the \emph{power function} \cite{wendland2005scattered, iske2003approximation} measures worst-case interpolation error:
\begin{equation}
P_{\mathcal{X}}(\mathbf{x})^2 = \phi(0) - \boldsymbol{\phi}(\mathbf{x})^\top \mathbf{\Phi}^{-1} \boldsymbol{\phi}(\mathbf{x}),
\end{equation}
Note the structural parallel to GP posterior variance: $\sigma^2_{\text{GP}} = k(\mathbf{x}, \mathbf{x}) - \mathbf{k}^\top \mathbf{K}^{-1} \mathbf{k}$. Our uncertainty estimate $\hat{\sigma}(\mathbf{x}) = |1 - \boldsymbol{\phi}(\mathbf{x})^\top \mathbf{\Phi}^{-1} \boldsymbol{\phi}(\mathbf{x})|^{1/2}$ is precisely the power function (since $\phi(0) = 1$ for the normalized Multiquadric). When a pseudo-observation at $\mathbf{x}_*$ is added, the power function drops to zero there and decreases locally, producing the same qualitative suppression geometry as GP conditioning. This is why MQ-RBF passes the SDD. Unlike GP posterior variance, the power function is a deterministic worst-case bound rather than a probabilistic calibration.

\section{Full Experimental Results}
\label{app:full_experiments}

\subsection{Experiment 1: Prediction Changes with Fake Data}

A GP with RBF kernel is fit to a small 1D training set, then a pseudo-observation is inserted without modifying hyperparameters. Figure~\ref{fig:exp1} shows: (i) the mean shifts globally near the inserted point; (ii) the variance contracts locally, suppressing EI.

\begin{figure}[h]
\centering
\includegraphics[width=\linewidth]{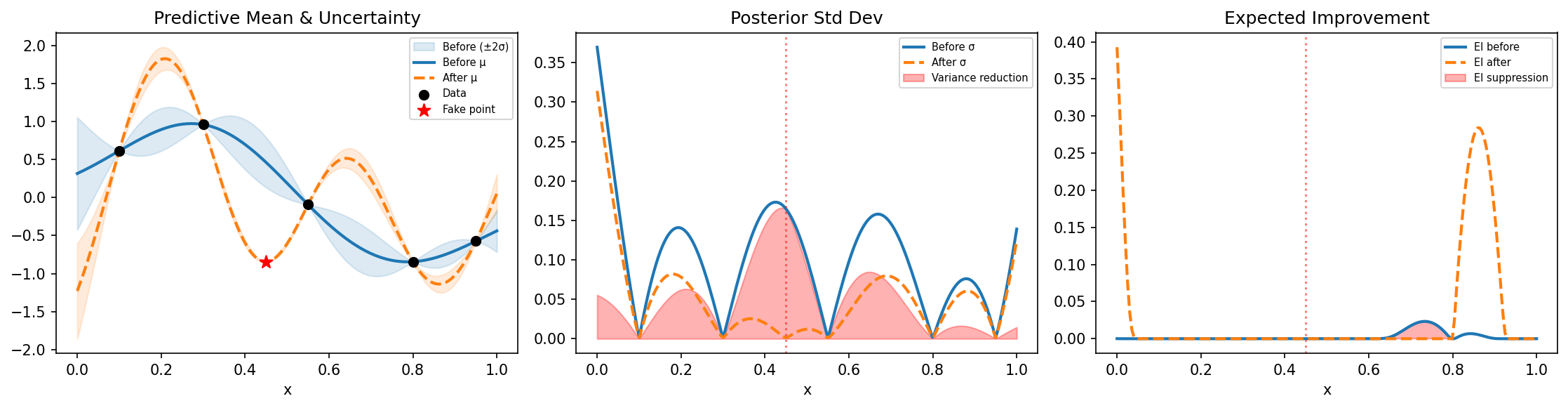}
\caption{Effect of adding a fake observation on GP predictions and EI.}
\label{fig:exp1}
\end{figure}

\subsection{Experiment 2: Surrogate Model Comparison}

Four surrogates (GP, MQ-RBF, NN, RF) on the same 1D problem. After inserting a pseudo-observation, GP and MQ-RBF exhibit measurable prediction changes; NN and RF show no change without retraining (Figure~\ref{fig:exp2}).

\begin{figure}[h]
\centering
\includegraphics[width=\linewidth]{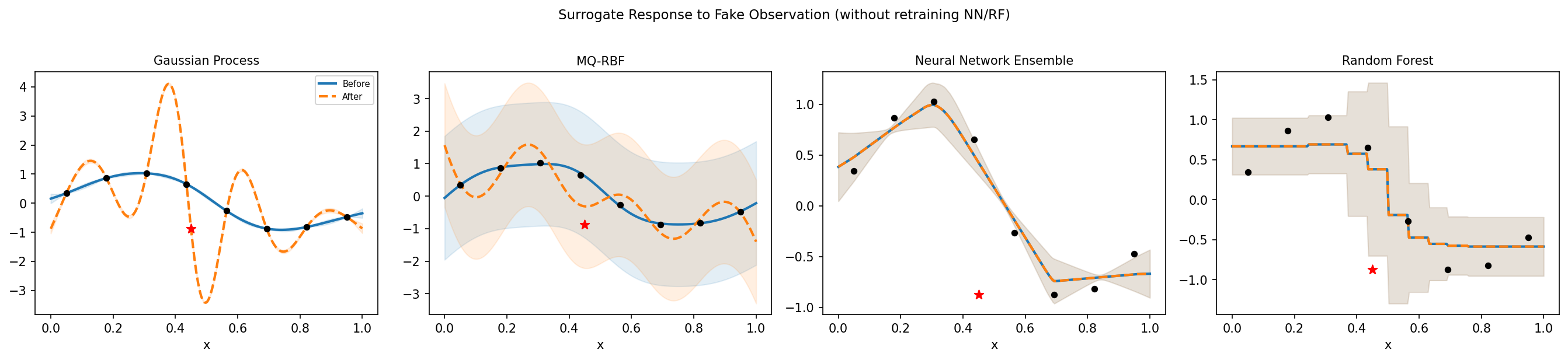}
\caption{Surrogate model response to fake data. GP and MQ-RBF update predictions; NN and RF remain unchanged.}
\label{fig:exp2}
\end{figure}

\subsection{SDD Part B: Fixed vs.\ Random Restarts}

Figure~\ref{fig:exp3a} shows pairwise batch distances under fixed starts; Figure~\ref{fig:exp3b} compares fixed vs.\ pre-drawn random restarts, confirming that diversity is structural.

\begin{figure}[h]
\centering
\includegraphics[width=0.85\linewidth]{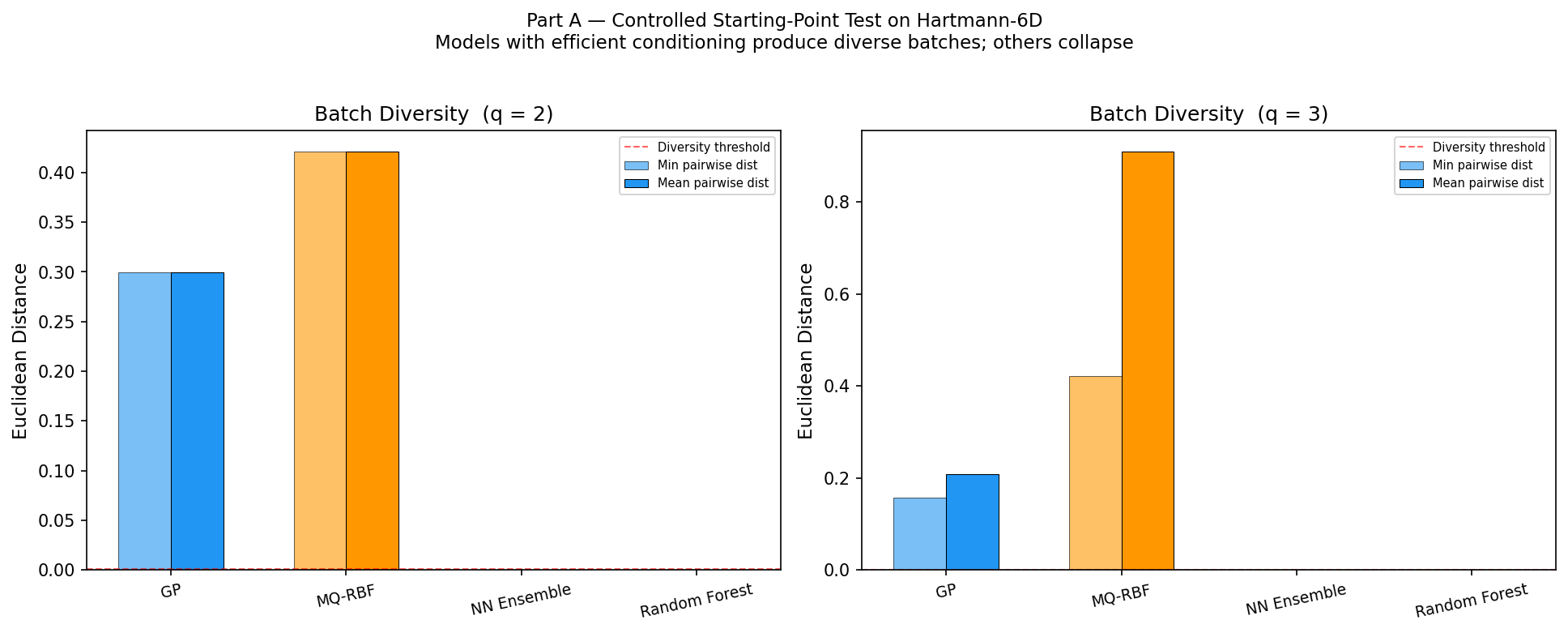}
\caption{SDD Part A: pairwise distances within batches. GP and MQ-RBF maintain diversity; NN and RF collapse.}
\label{fig:exp3a}
\end{figure}

\begin{figure}[h]
\centering
\includegraphics[width=0.85\linewidth]{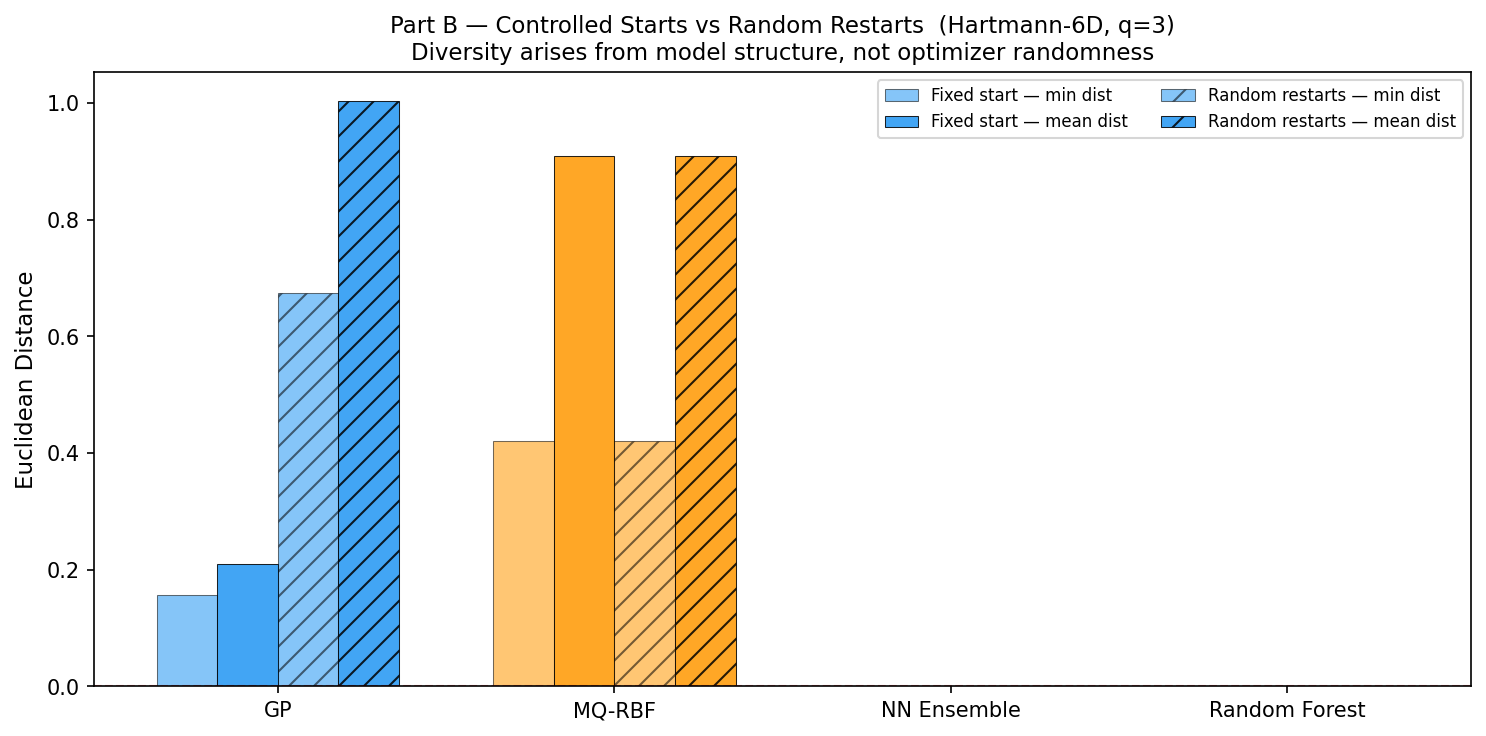}
\caption{SDD Part B: Fixed vs.\ random restarts (reused). GP/MQ-RBF diverse under both; NN/RF degenerate under both.}
\label{fig:exp3b}
\end{figure}

\subsection{Experiment 7a: Efficient Conditioning vs.\ Retraining}

\begin{table}[h]
\centering
\caption{SDD with retrained parametric models (Hartmann-6D, $q{=}3$).}
\label{tab:retrained}
\small
\begin{tabular}{lcccc}
\toprule
\textbf{Surrogate} & \textbf{Min Dist} & \textbf{Mean Dist} & \textbf{Diverse?} & \textbf{Time (s)} \\
\midrule
GP (condition)         & 0.157 & 0.209 & Yes & 0.04 \\
MQ-RBF (re-solve)      & 0.421 & 0.909 & Yes & 0.09 \\
NN (no retrain)        & 0.000 & 0.000 & No  & 0.02 \\
NN (retrained)         & 0.378 & 0.477 & Yes & 0.60 \\
RF (no rebuild)        & 0.000 & 0.000 & No  & 0.26 \\
RF (rebuilt)           & 0.000 & 0.000 & No  & 0.39 \\
\bottomrule
\end{tabular}
\end{table}

Retrained NNs produce diversity at $15\times$ the cost of GP conditioning. Rebuilt random forests remain degenerate: bootstrap sampling dilutes the pseudo-observation across 200 trees.

\subsection{Experiment 7b: Acquisition Function Agnosticism}

\begin{table}[h]
\centering
\caption{SDD with EI vs.\ UCB (Hartmann-6D, $q{=}3$).}
\label{tab:acq_agnostic}
\small
\begin{tabular}{lcccc}
\toprule
 & \multicolumn{2}{c}{\textbf{EI}} & \multicolumn{2}{c}{\textbf{UCB ($\beta{=}2$)}} \\
\cmidrule(lr){2-3} \cmidrule(lr){4-5}
\textbf{Surrogate} & Min Dist & Mean Dist & Min Dist & Mean Dist \\
\midrule
GP            & 0.157 & 0.209 & 0.169 & 0.206 \\
MQ-RBF        & 0.421 & 0.909 & 0.422 & 0.907 \\
NN Ensemble   & 0.000 & 0.000 & 0.000 & 0.000 \\
Random Forest & 0.000 & 0.000 & 0.000 & 0.000 \\
\bottomrule
\end{tabular}
\end{table}

Diversity patterns are identical under EI and UCB, validating Proposition~\ref{prop:acq_general}.

\subsection{Experiment 8: SDD Robustness Across Initial Datasets}

\begin{table}[h]
\centering
\caption{SDD robustness across 5 initial datasets (Hartmann-6D, $q{=}3$).}
\label{tab:multi_seed}
\small
\begin{tabular}{lccc}
\toprule
\textbf{Surrogate} & \textbf{Seeds Diverse} & \textbf{Min Dist (mean $\pm$ std)} & \textbf{Consistent?} \\
\midrule
GP            & 5/5 & $0.480 \pm 0.314$ & Yes \\
MQ-RBF        & 5/5 & $0.287 \pm 0.073$ & Yes \\
NN Ensemble   & 0/5 & $0.000 \pm 0.000$ & Yes \\
Random Forest & 0/5 & $0.000 \pm 0.000$ & Yes \\
\bottomrule
\end{tabular}
\end{table}

The diverse/degenerate classification is perfectly consistent across seeds.

\subsection{Ackley-8D Convergence and Diversity}

Figures~\ref{fig:conv_ackley}--\ref{fig:div_ackley} complement the Hartmann-6D main-text results. The same patterns hold: CL-min and KB achieve near-sequential convergence with consistently high batch diversity.

\begin{figure}[h]
\centering
\includegraphics[width=\linewidth]{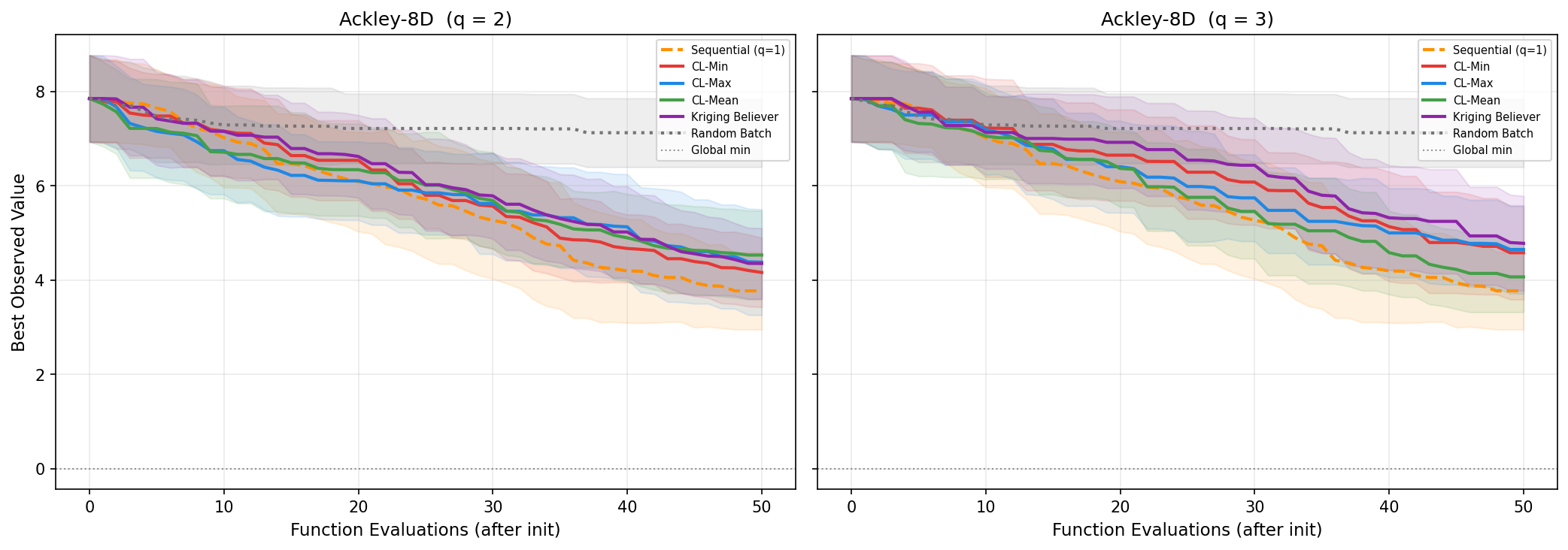}
\caption{Convergence on Ackley-8D (20 seeds, $\pm$1 std).}
\label{fig:conv_ackley}
\end{figure}

\begin{figure}[h]
\centering
\includegraphics[width=\linewidth]{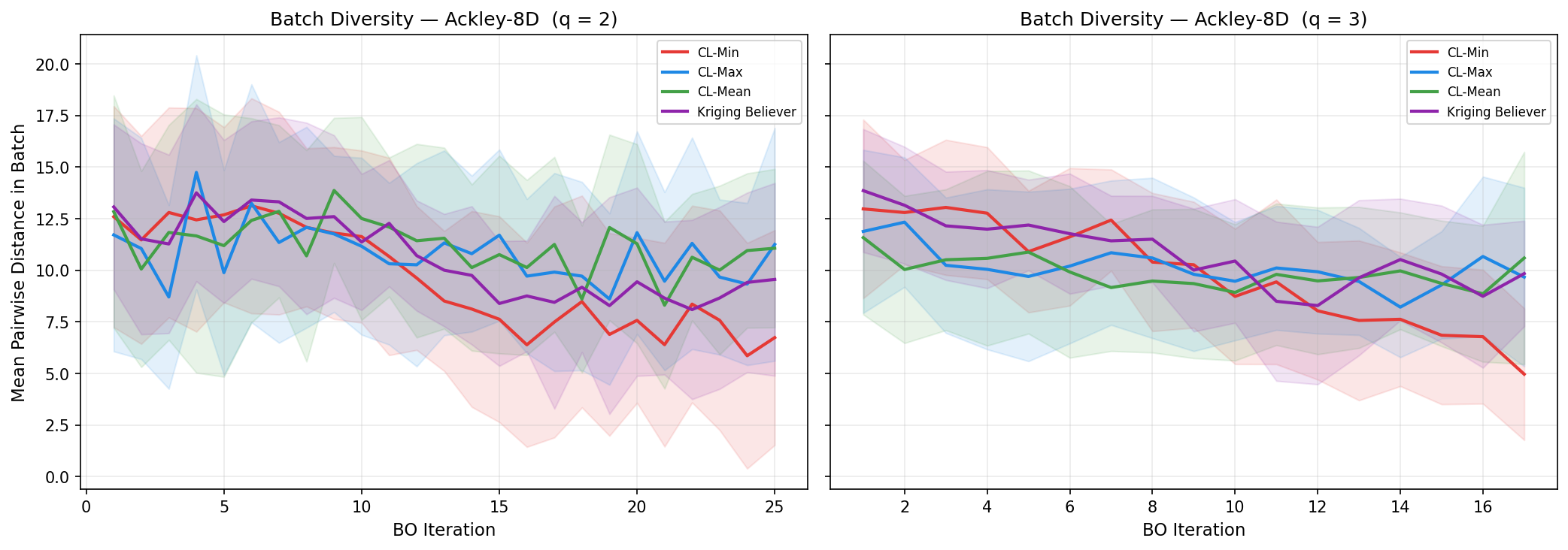}
\caption{Batch diversity on Ackley-8D.}
\label{fig:div_ackley}
\end{figure}

\subsection{Hartmann-6D Diversity}

Figure~\ref{fig:diversity} tracks batch diversity across BO iterations on Hartmann-6D, showing CL-min maintains the highest diversity throughout.

\begin{figure}[h]
\centering
\includegraphics[width=\linewidth]{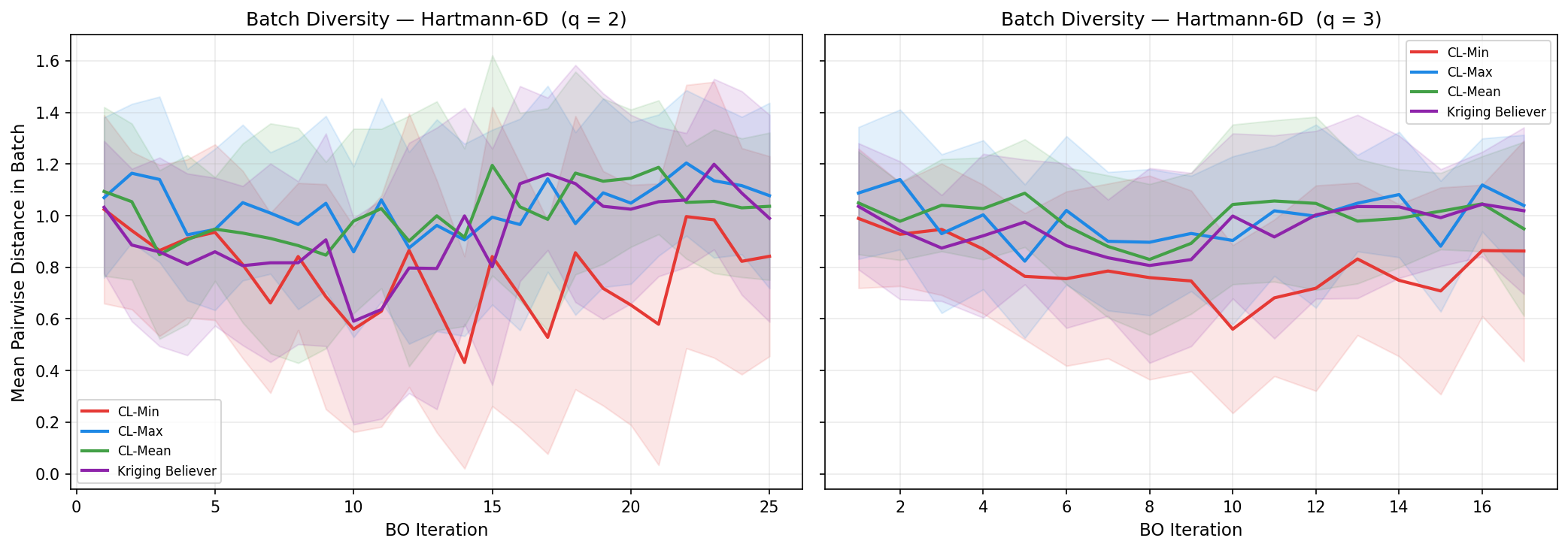}
\caption{Batch diversity across BO iterations on Hartmann-6D.}
\label{fig:diversity}
\end{figure}

\subsection{Levy-10D Convergence and Diversity}

Figures~\ref{fig:conv_levy}--\ref{fig:diversity_levy} extend the convergence analysis to Levy-10D ($d{=}10$, 10 seeds). CL-min and KB remain competitive with sequential BO.

\begin{figure}[h]
\centering
\includegraphics[width=\linewidth]{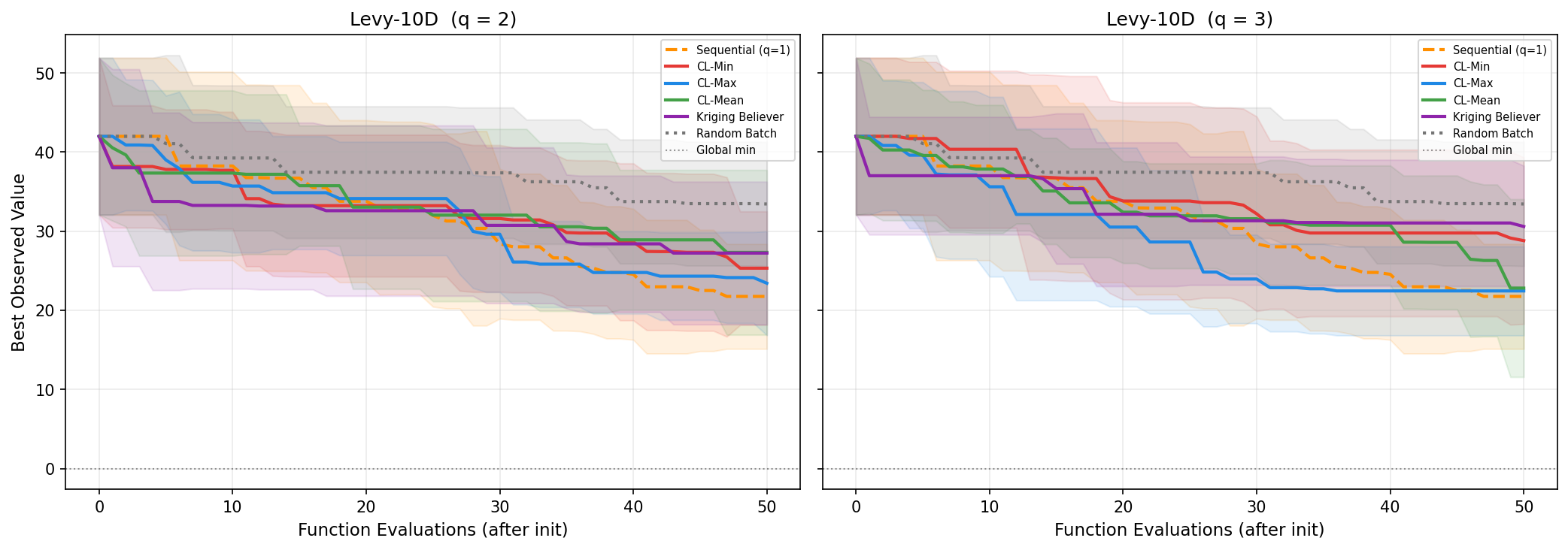}
\caption{Convergence on Levy-10D (10 seeds, $\pm$1 std).}
\label{fig:conv_levy}
\end{figure}

\begin{figure}[h]
\centering
\includegraphics[width=\linewidth]{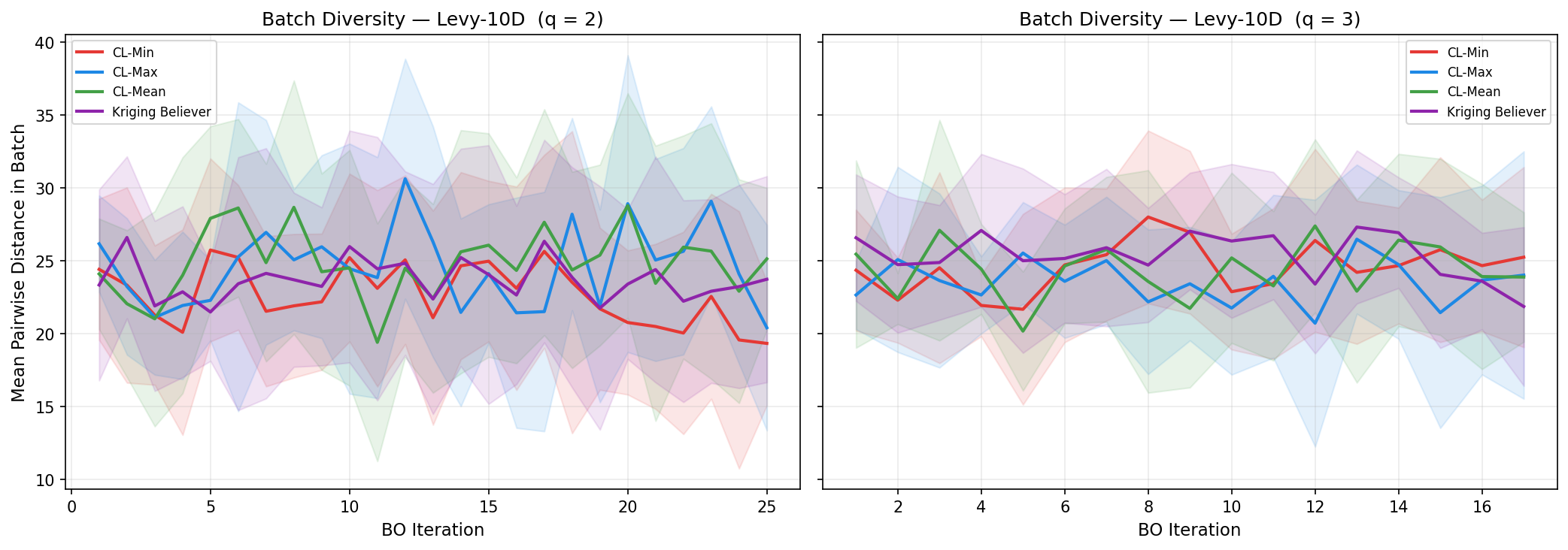}
\caption{Batch diversity on Levy-10D.}
\label{fig:diversity_levy}
\end{figure}

\subsection{SVM Tuning Convergence}

Figure~\ref{fig:svm} shows the SVM hyperparameter optimization on Breast Cancer (10 seeds). GP + CL-min achieves the lowest mean error with $2\times$ wall-clock speedup over sequential.

\begin{figure}[h]
\centering
\includegraphics[width=0.85\linewidth]{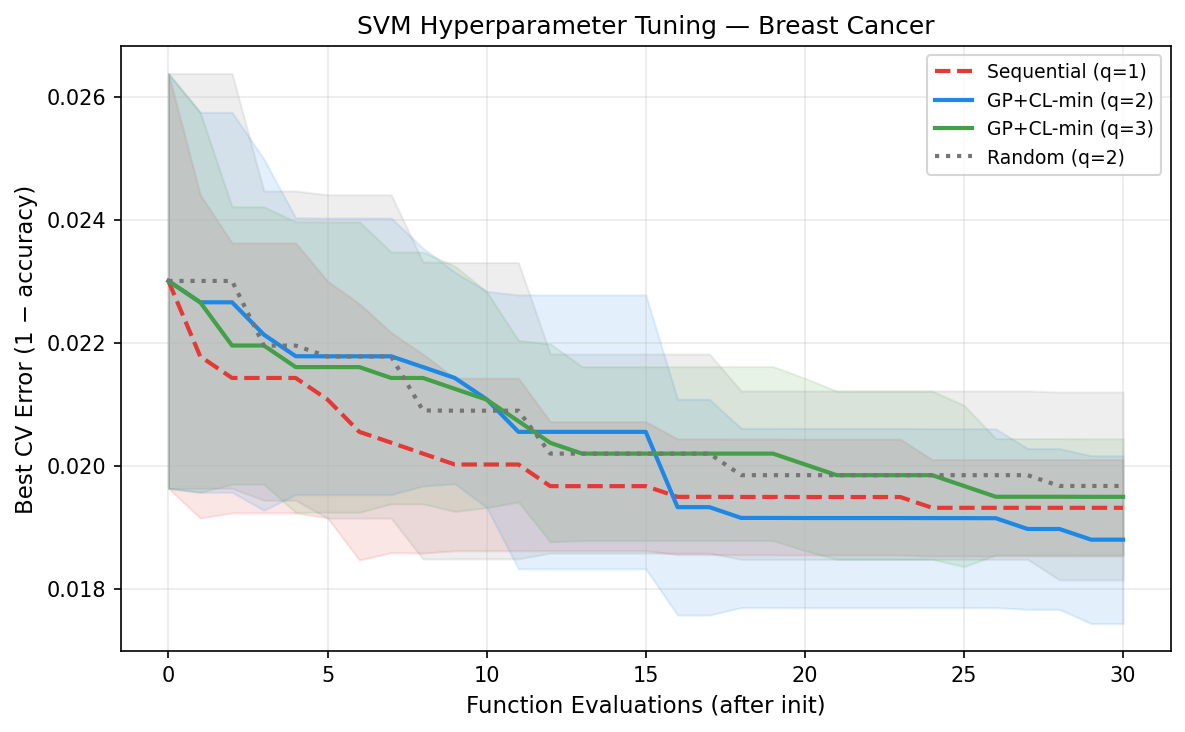}
\caption{SVM hyperparameter optimization on Breast Cancer (10 seeds).}
\label{fig:svm}
\end{figure}

\subsection{LP Comparison: Convergence Curves and Ackley-8D}

Figures~\ref{fig:lp_conv}--\ref{fig:lp_div_ackley} compare CL/KB against explicit Local Penalization on both benchmarks. CL-min's implicit penalty matches or outperforms LP in convergence and produces comparable diversity.

\begin{figure}[h]
\centering
\includegraphics[width=\linewidth]{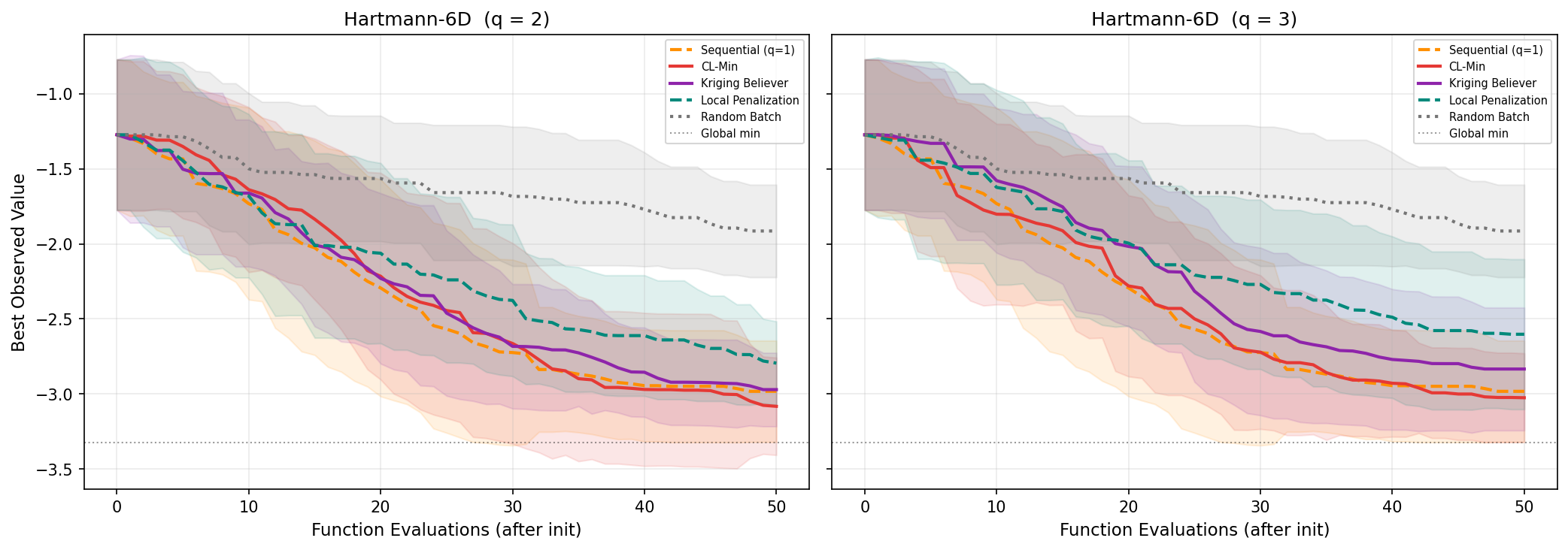}
\caption{CL/KB vs.\ LP on Hartmann-6D (20 seeds).}
\label{fig:lp_conv}
\end{figure}

\begin{figure}[h]
\centering
\includegraphics[width=\linewidth]{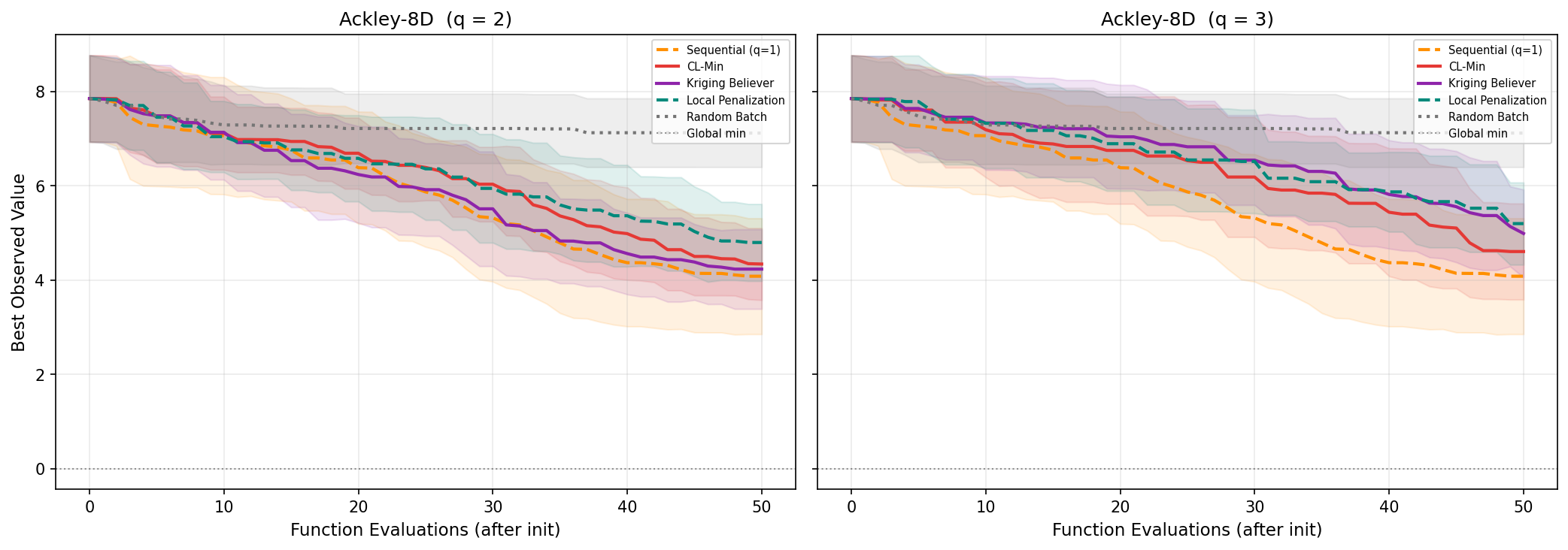}
\caption{CL/KB vs.\ LP on Ackley-8D (20 seeds).}
\label{fig:lp_ackley}
\end{figure}

\begin{figure}[h]
\centering
\includegraphics[width=\linewidth]{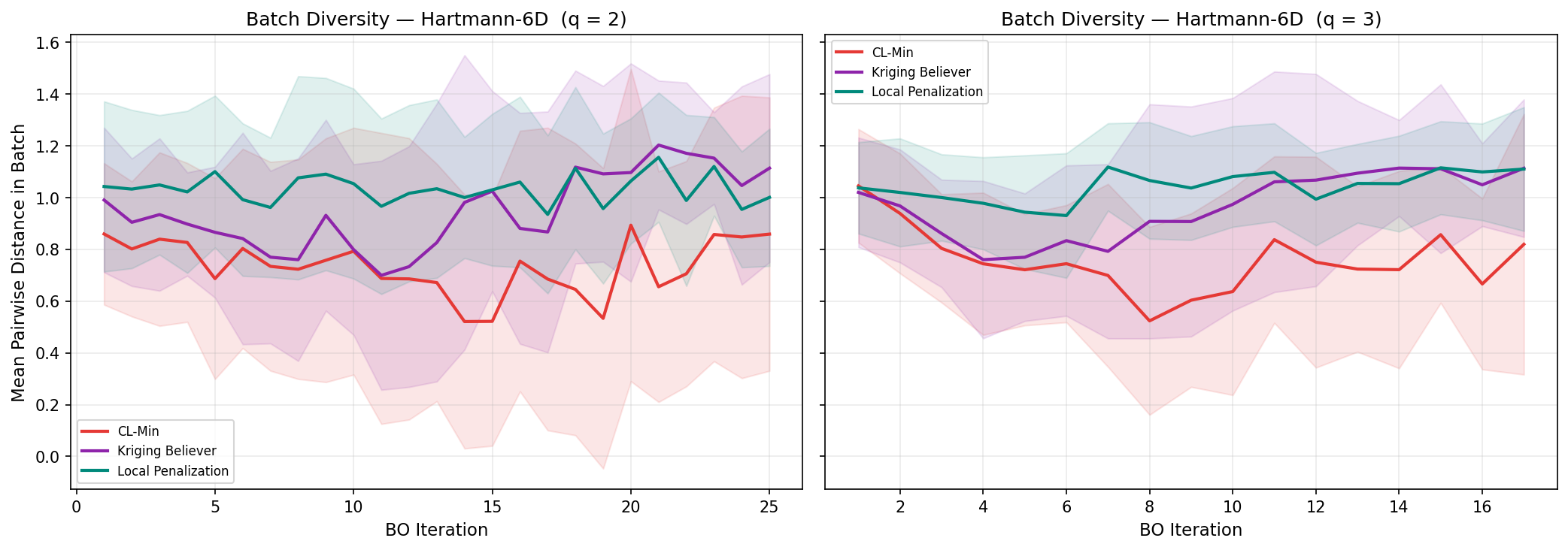}
\caption{Batch diversity: CL-min, KB, and LP on Hartmann-6D.}
\label{fig:lp_div}
\end{figure}

\begin{figure}[h]
\centering
\includegraphics[width=\linewidth]{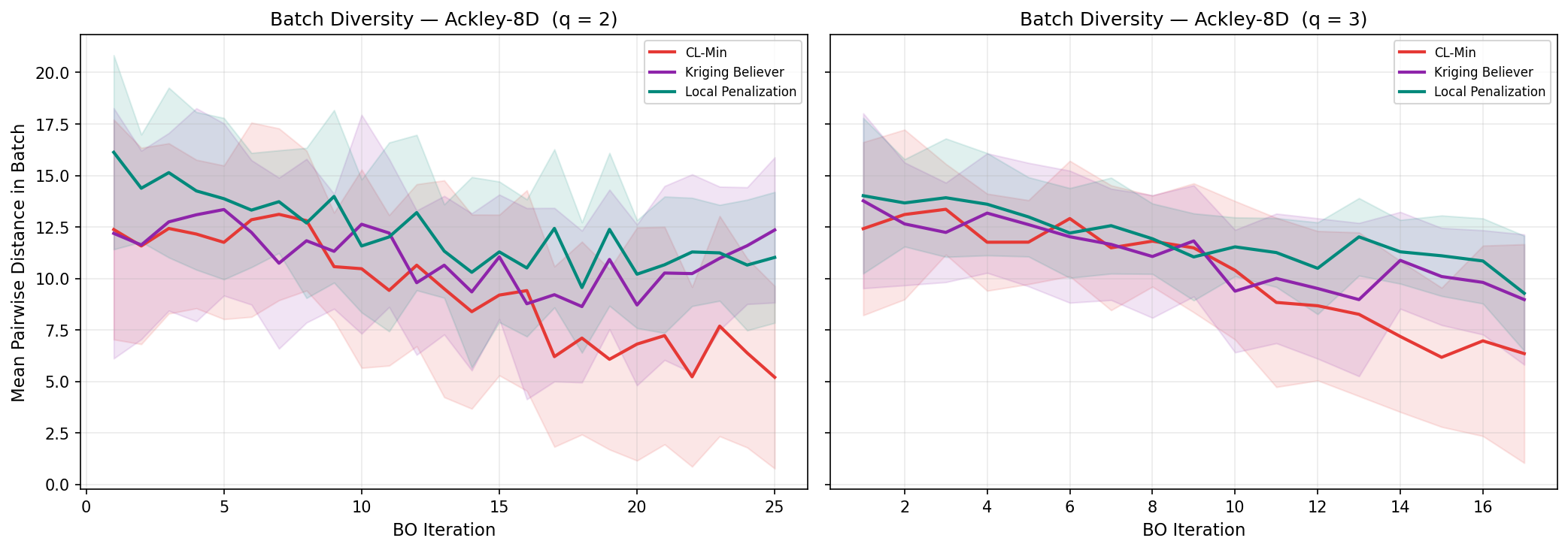}
\caption{Batch diversity: CL-min, KB, and LP on Ackley-8D.}
\label{fig:lp_div_ackley}
\end{figure}

\subsection{q-EI Comparison: Convergence Curves}

Figure~\ref{fig:qei_conv} shows that greedy CL/KB achieves convergence on par with BoTorch's joint q-EI on Hartmann-6D.

\begin{figure}[h]
\centering
\includegraphics[width=0.85\linewidth]{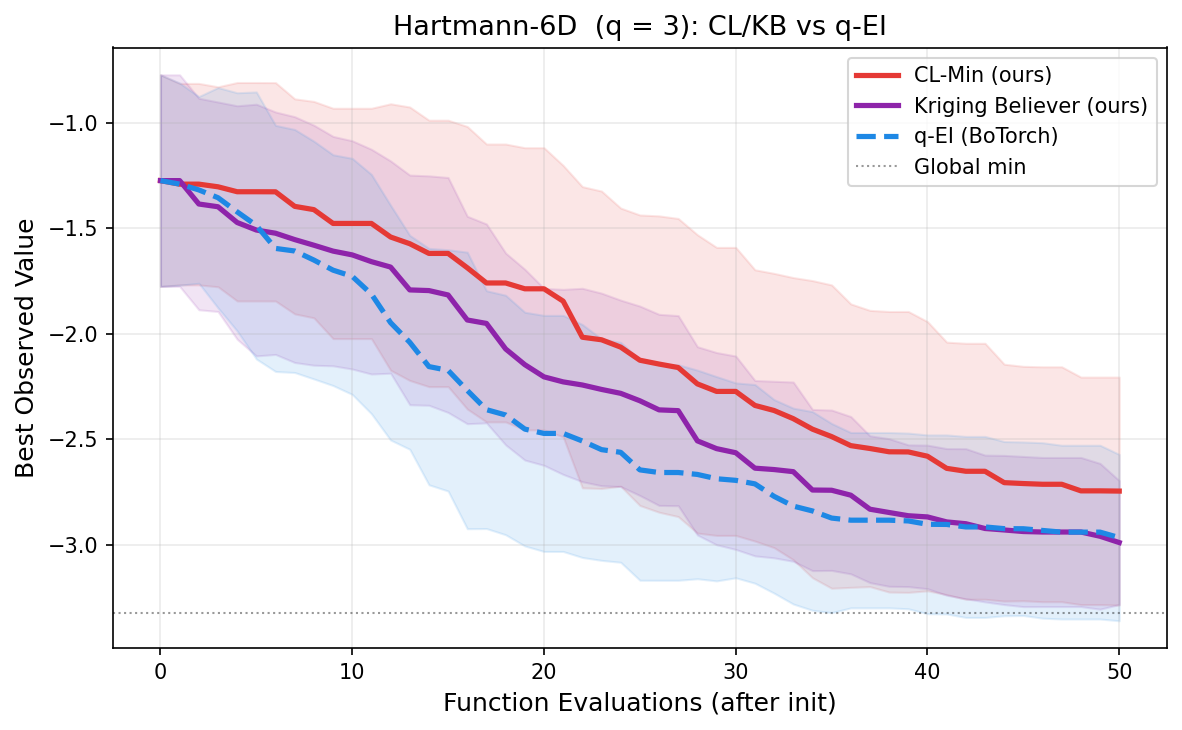}
\caption{CL/KB vs.\ q-EI (BoTorch) on Hartmann-6D ($q{=}3$, 20 seeds).}
\label{fig:qei_conv}
\end{figure}

\subsection{End-to-End Loop Timing}

\begin{table}[h]
\centering
\caption{End-to-end BO loop wall-clock (Hartmann-6D, $q{=}3$, budget${=}50$, 5 seeds).}
\label{tab:loop_timing}
\small
\begin{tabular}{lr}
\toprule
\textbf{Strategy} & \textbf{Total loop time (s)} \\
\midrule
CL-Min (ours)           & $5.34 \pm 1.19$ \\
Kriging Believer (ours) & $4.35 \pm 1.03$ \\
q-EI (BoTorch)          & $2.91 \pm 0.31$ \\
\bottomrule
\end{tabular}
\end{table}

\subsection{Retrained Models and Acquisition Agnosticism Figures}

Figure~\ref{fig:retrained} visualizes the retraining analysis from Table~\ref{tab:retrained}. Figure~\ref{fig:acq_agnostic} confirms identical SDD outcomes under EI and UCB.

\begin{figure}[h]
\centering
\includegraphics[width=\linewidth]{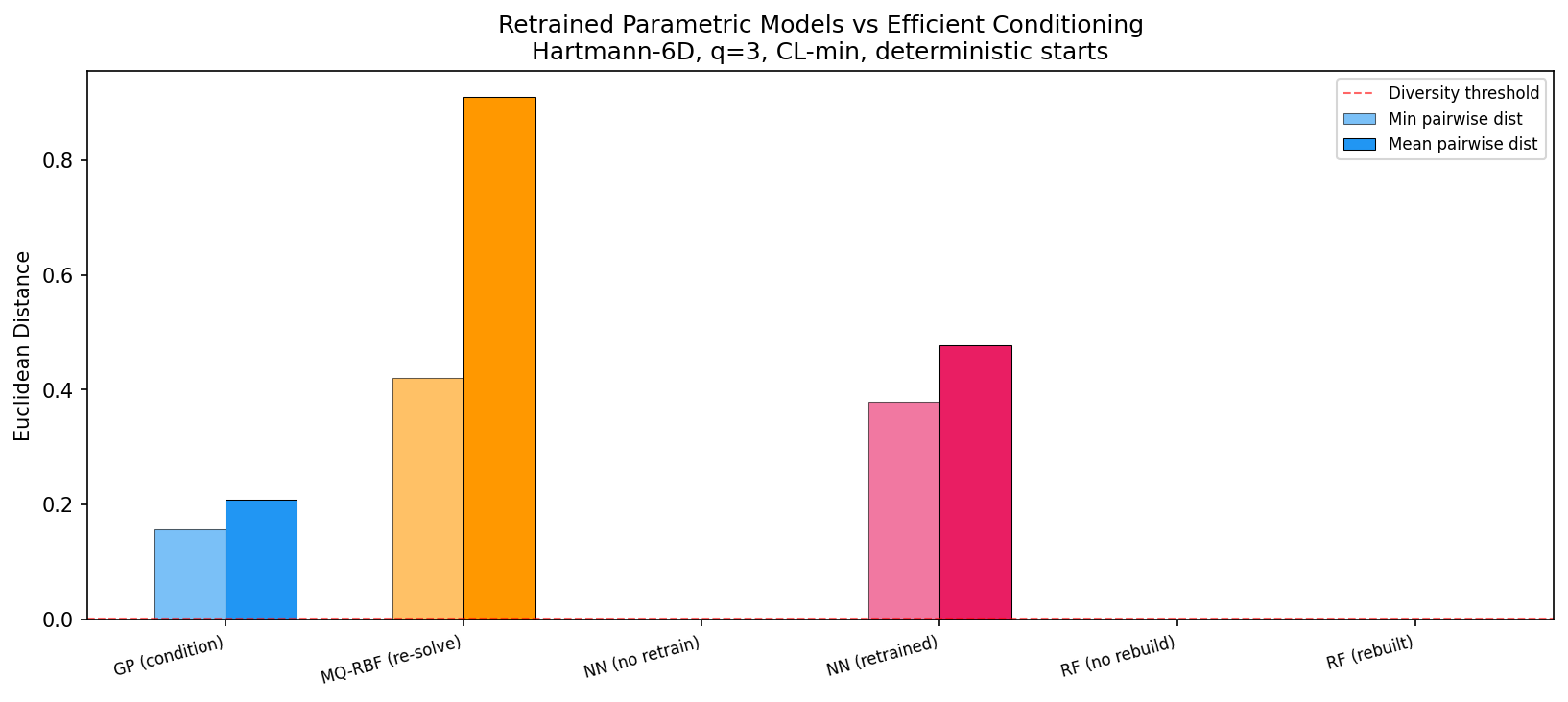}
\caption{Retrained parametric models vs.\ efficient conditioning.}
\label{fig:retrained}
\end{figure}

\begin{figure}[h]
\centering
\includegraphics[width=\linewidth]{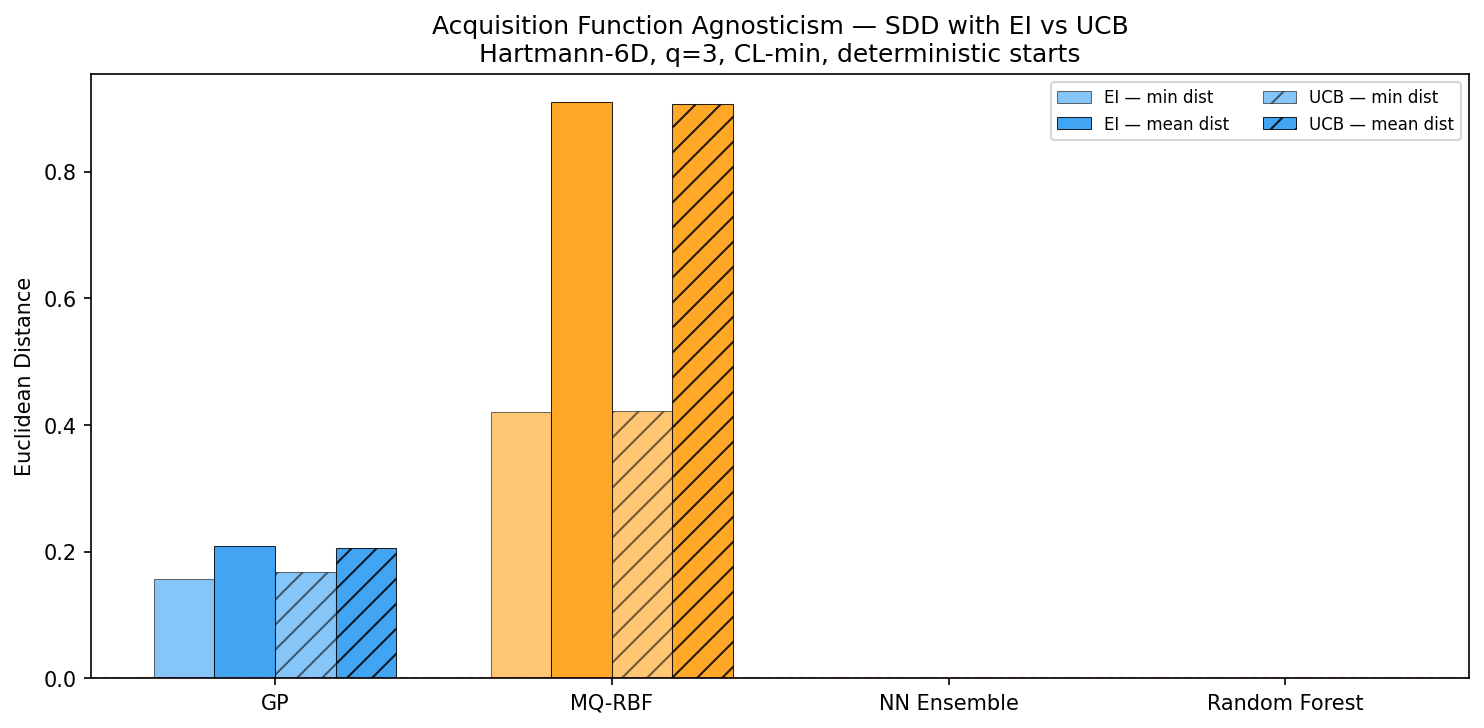}
\caption{SDD with EI vs.\ UCB: identical diversity patterns for all surrogates.}
\label{fig:acq_agnostic}
\end{figure}

\subsection{Large Batch Experiment: $q{=}10$ on Levy-10D}
\label{app:large_batch}

To test whether the diversity mechanism scales beyond $q \leq 3$, we run the SDD and full BO loops with $q{=}10$ on Levy-10D ($d{=}10$).

\textbf{SDD at $q{=}10$.} Table~\ref{tab:sdd_q10} shows that the surrogate classification is unchanged: GP and MQ-RBF produce diverse batches across all 10 members; NN and RF collapse completely, exactly as Corollary~\ref{cor:parametric_degeneracy} predicts.

\begin{table}[h]
\centering
\caption{SDD at $q{=}10$ on Levy-10D ($n_\text{init}{=}30$, fixed starts).}
\label{tab:sdd_q10}
\small
\begin{tabular}{lccl}
\toprule
\textbf{Surrogate} & \textbf{Min Dist} & \textbf{Mean Dist} & \textbf{Diverse?} \\
\midrule
GP            & 0.459 & 3.402 & Yes \\
MQ-RBF        & 3.743 & 11.712 & Yes \\
NN Ensemble   & 0.000 & 0.000 & No \\
Random Forest & 0.000 & 0.000 & No \\
\bottomrule
\end{tabular}
\end{table}

\textbf{Convergence at $q{=}10$.} Figure~\ref{fig:conv_q10} shows BO convergence with 150 evaluations (15 BO iterations) over 20 seeds. CL-min ($10.7 \pm 6.3$) and KB ($12.4 \pm 7.4$) substantially outperform random batching ($23.9 \pm 6.2$)---a $2\text{--}2.2\times$ improvement in final objective---confirming that structured batch diversity, not merely parallel evaluation, drives the optimization gains. Sequential BO ($4.7 \pm 4.3$) outperforms at equal \emph{total evaluations} because it makes 150 fully informed decisions versus 15 greedy batches of 10. However, the correct comparison for the parallel setting is at equal \emph{wall-clock time}: after 15 iterations, sequential has completed only 15 evaluations (best objective still ${\approx}35$), while CL-min has completed 150 evaluations and reached $10.7$---a $3{-}4\times$ improvement in objective value. The sample-efficiency gap at equal budgets is thus the expected cost of $10\times$ wall-clock speedup, consistent with the diminishing-returns prediction from Proposition~\ref{prop:suppression_radius}.

\begin{figure}[h]
\centering
\includegraphics[width=\linewidth]{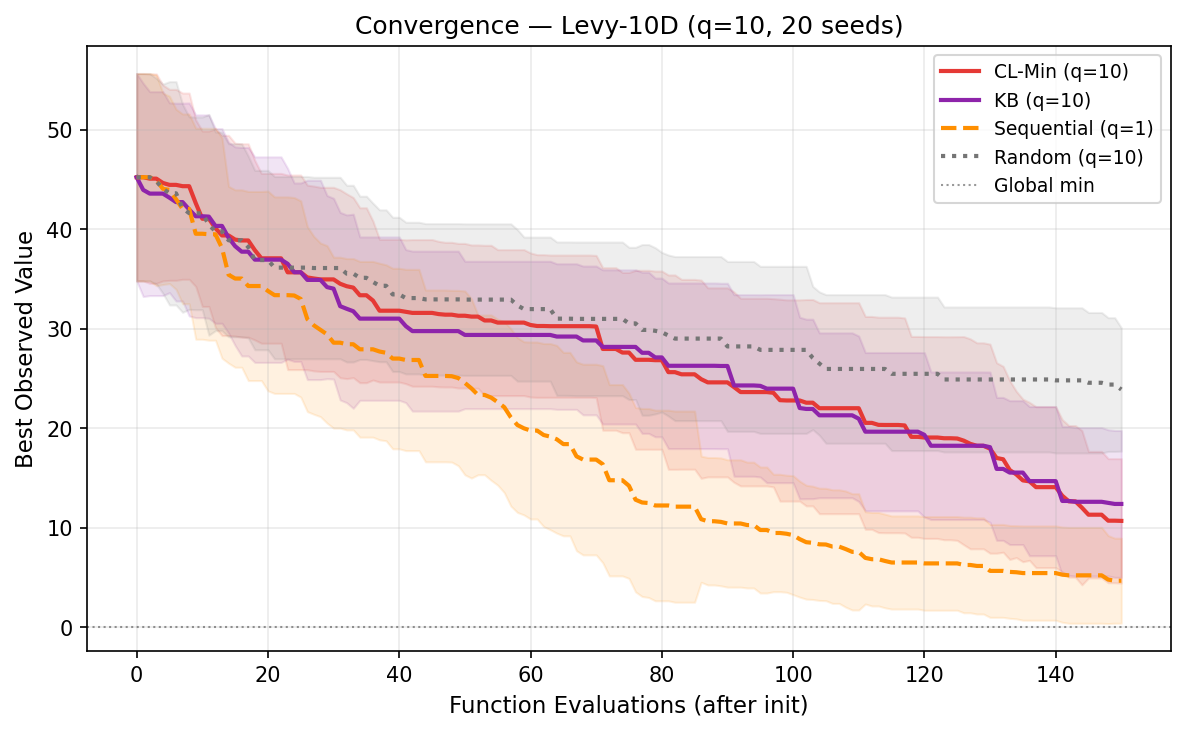}
\caption{Convergence on Levy-10D ($q{=}10$, 150 evaluations, 20 seeds, $\pm$1 std).}
\label{fig:conv_q10}
\end{figure}

\textbf{Batch diversity.} Figure~\ref{fig:div_q10} tracks mean pairwise distance within each batch across BO iterations. Diversity remains high (20--28) and never collapses, even as the surrogate accumulates data.

\begin{figure}[h]
\centering
\includegraphics[width=\linewidth]{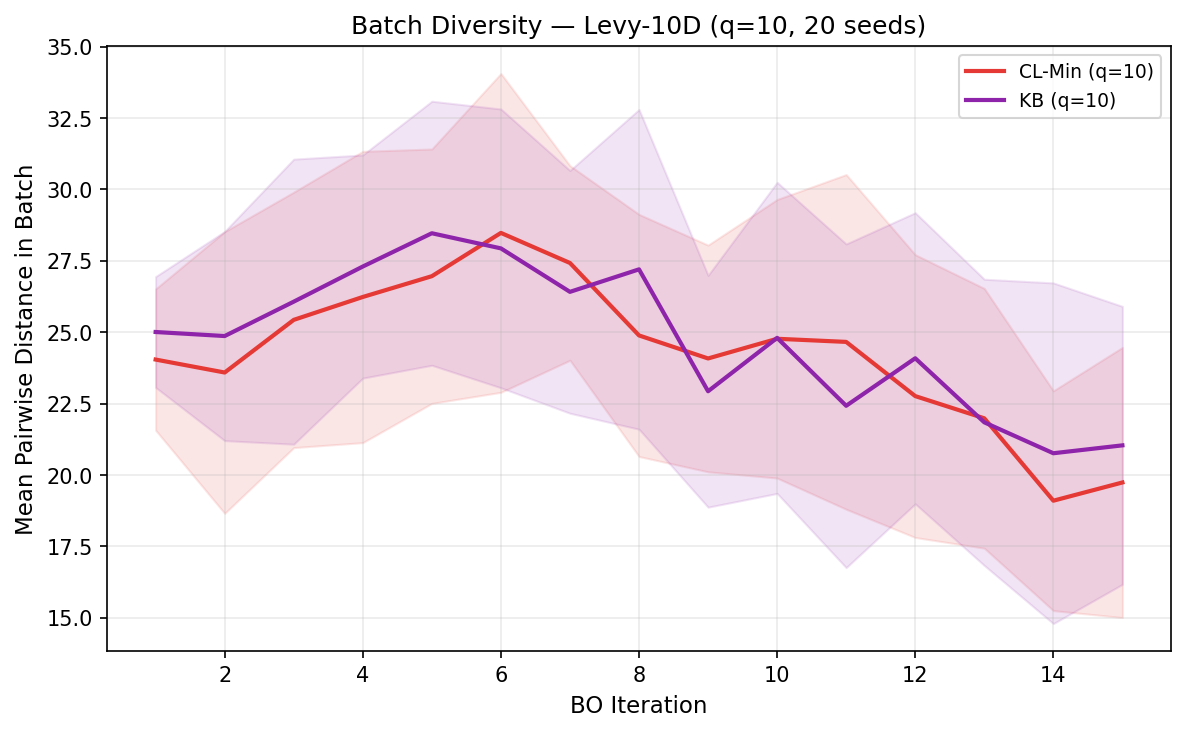}
\caption{Batch diversity across BO iterations ($q{=}10$, Levy-10D, 20 seeds).}
\label{fig:div_q10}
\end{figure}

\textbf{Per-member diversity.} Figure~\ref{fig:per_member} shows the minimum pairwise distance after adding each successive batch member. For both GP and MQ-RBF, all 10 members remain non-degenerate (min dist $\gg 0.001$). GP's minimum distance stabilizes around $0.46$ after the third member; MQ-RBF maintains higher separation ($3.7$--$7.0$) throughout.

\begin{figure}[h]
\centering
\includegraphics[width=0.85\linewidth]{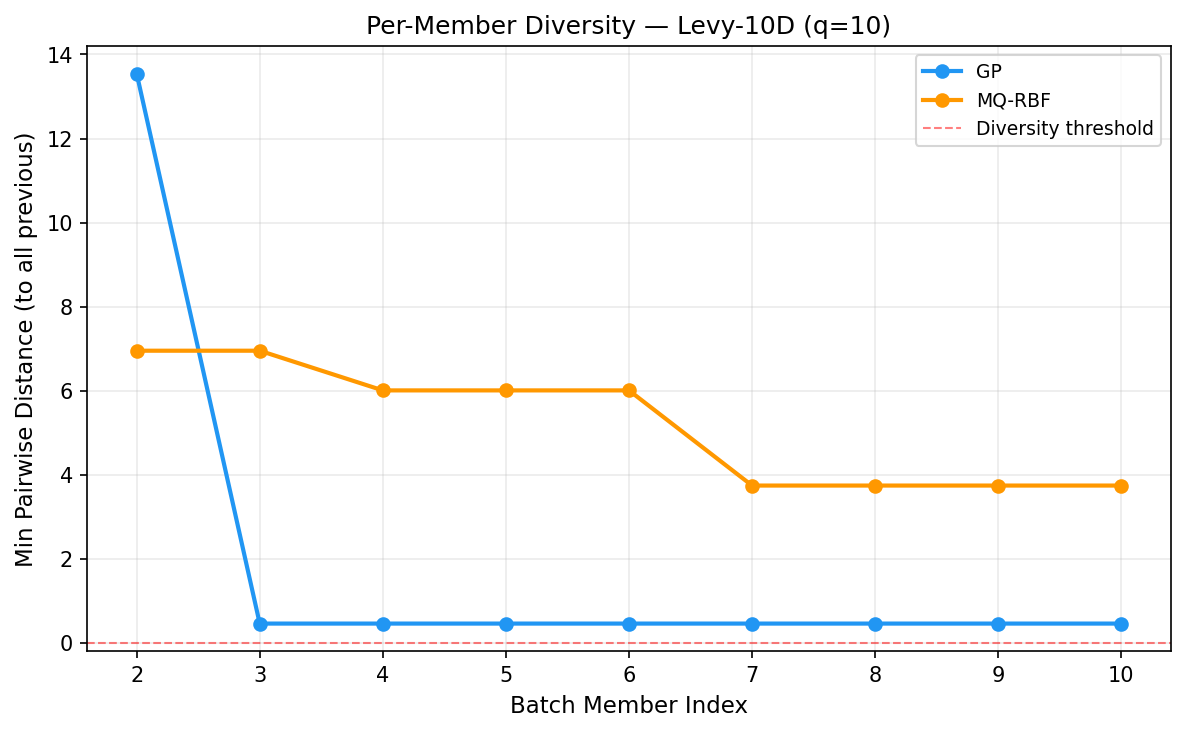}
\caption{Per-member minimum pairwise distance within a $q{=}10$ batch (Levy-10D).}
\label{fig:per_member}
\end{figure}

\FloatBarrier
\section{Hyperparameter Details}
\label{app:hyperparams}

\begin{table}[h]
\centering
\caption{Hyperparameter settings for surrogate models.}
\label{tab:hyperparams}
\small
\begin{tabular}{ll}
\toprule
\textbf{Parameter} & \textbf{Value} \\
\midrule
\multicolumn{2}{l}{\textit{Gaussian Process}} \\
\quad Kernel & Mat\'{e}rn $\nu = 2.5$ (ARD) \\
\quad Signal variance $\sigma_f^2$ & Optimized, init $= 1.0$, bounds $[10^{-3}, 10^3]$ \\
\quad Lengthscale $\ell$ & Optimized per dimension, init $= 1.0$, bounds $[10^{-2}, 10^2]$ \\
\quad Noise variance $\sigma_n^2$ & $10^{-6}$ (fixed) \\
\quad Optimizer restarts & 5 \\
\quad Normalize $y$ & Yes \\
\midrule
\multicolumn{2}{l}{\textit{MQ-RBF Network}} \\
\quad Basis function & Multiquadric: $\phi(r) = \sqrt{1 + r^2/\epsilon^2}$ \\
\quad Spread $\epsilon$ & Median pairwise distance \\
\quad Regularization & $10^{-4}$ \\
\quad Centers & Data-dependent \\
\midrule
\multicolumn{2}{l}{\textit{Neural Network Ensemble}} \\
\quad Architecture & 2 $\times$ 64 hidden, ReLU, 10 networks \\
\quad Max iterations & 2000, adaptive LR, early stopping \\
\midrule
\multicolumn{2}{l}{\textit{Random Forest}} \\
\quad Trees & 200, min leaf $= 2$ \\
\midrule
\multicolumn{2}{l}{\textit{Acquisition Optimization}} \\
\quad Function & EI ($\xi = 0.01$), L-BFGS-B, 10 restarts \\
\quad SDD fixed starts & $[0.2, 0.5, 0.8]^d$ \\
\bottomrule
\end{tabular}
\end{table}

\FloatBarrier
\section{Experimental Setup Details}
\label{app:experimental_setup}

\textbf{Benchmarks:} Hartmann-6D ($[0,1]^6$, min $\approx -3.32$), Ackley-8D ($[-5,5]^8$, min $= 0$), Levy-10D ($[-10,10]^{10}$, min $= 0$ at $\mathbf{1}$), SVM tuning (Breast Cancer, 4D, objective: $1 - \text{CV accuracy}$).

\textbf{Initialization:} Latin Hypercube with $n_\text{init} = 2d$ (except Exp~3: $n_\text{init} = 30$). \textbf{Budget:} 50 evaluations (Exps~4, 9, 10, 11); 30 (Exp~6); 150 (Exp~12). Exps~3, 7a, 7b, 8 evaluate single batches.

\textbf{Reproducibility.} All experiments are standalone Python scripts (\texttt{exp1\_illustration.py} through \texttt{exp12\_large\_batch\_q10.py}; script numbering reflects development order and may differ from paper experiment numbers) with shared \texttt{utils.py}. Fixed random seeds ensure exact reproducibility.

\FloatBarrier
\section{Compute Resources}
\label{app:compute}

All experiments ran on a single Apple M-series CPU, 16~GB RAM, no GPU.

\begin{table}[h]
\centering
\caption{Approximate runtimes.}
\label{tab:compute}
\small
\begin{tabular}{lr}
\toprule
\textbf{Experiment} & \textbf{Runtime} \\
\midrule
Exp 1--2 (1D illustrations) & $< 1$s \\
Exp 3 (SDD) & $\sim 5$s \\
Exp 4 (Convergence, Hartmann/Ackley/Levy) & $\sim 13$ min \\
Exp 5 (Timing) & $\sim 17$s \\
Exp 6 (SVM tuning) & $\sim 2$ min \\
Exp 7a/7b (Retrained + UCB) & $\sim 5$s \\
Exp 8 (Multi-seed SDD) & $\sim 4$s \\
Exp 9 (LP comparison) & $\sim 38$ min \\
Exp 10 (q-EI comparison) & $\sim 4$ min \\
Exp 11 (Noisy Hartmann-6D) & $\sim 5$ min \\
Exp 12 (Large batch $q{=}10$, Levy-10D) & $\sim 66$ min \\
\midrule
\textbf{Total} & $\sim 128$ min \\
\bottomrule
\end{tabular}
\end{table}

\FloatBarrier

\clearpage

\end{document}